\newtheorem{theorem}{Theorem}
\newtheorem{lemma}{Lemma}
\titlespacing\section{0pt}{12pt plus 3pt minus 3pt}{1pt plus 1pt minus 1pt}
\titlespacing\subsection{0pt}{10pt plus 3pt minus 3pt}{1pt plus 1pt minus 1pt}
\titlespacing\subsubsection{0pt}{8pt plus 3pt minus 3pt}{1pt plus 1pt minus 1pt}
\definecolor{lime}{HTML}{A6CE39}
\DeclareRobustCommand{\orcidicon}{
	\begin{tikzpicture}
	\draw[lime, fill=lime] (0,0)
	circle [radius=0.16]
	node[white] {{\fontfamily{qag}\selectfont \tiny ID}};
	\draw[white, fill=white] (-0.0625,0.095)
	circle [radius=0.007];
	\end{tikzpicture}
	\hspace{-2mm}
}
\title{Embracing Federated Learning: Enabling Weak Client Participation via Partial Model Training}
\author{%
  Sunwoo Lee\textsuperscript{\rm 1}, Tuo Zhang\textsuperscript{\rm 2}, Saurav Prakash\textsuperscript{\rm 3}, Yue Niu\textsuperscript{\rm 2}, Salman Avestimehr\textsuperscript{\rm 2} \\
  \textsuperscript{\rm 1}Inha University, Republic of Korea\\
  \textsuperscript{\rm 2}University of Southern California, USA\\
  \textsuperscript{\rm 3}University of Illinois Urbana-Champaign, USA\\
  \textsuperscript{\rm 1}\texttt{sunwool@inha.ac.kr} \\
  \textsuperscript{\rm 2}\texttt{\{tuozhang, yueniu, avestime\}@usc.edu} \\
  \textsuperscript{\rm 3}\texttt{sauravp2@illinois.edu}
}
\begin{document}

\twocolumn[ 
  \begin{@twocolumnfalse} 

\maketitle

\begin{abstract}
In Federated Learning (FL), clients may have weak devices that cannot train the full model or even hold it in their memory space. To implement large-scale FL applications, thus, it is crucial to develop a distributed learning method that enables the participation of such weak clients. We propose \texttt{EmbracingFL}, a general FL framework that allows all available clients to join the distributed training regardless of their system resource capacity. The framework is built upon a novel form of partial model training method in which each client trains as many consecutive output-side layers as its system resources allow. Our study demonstrates that \texttt{EmbracingFL} encourages each layer to have similar data representations across clients, improving FL efficiency. The proposed partial model training method guarantees convergence to a neighbor of stationary points for non-convex and smooth problems. We evaluate the efficacy of \texttt{EmbracingFL} under a variety of settings with a mixed number of strong, moderate ($\sim 40\%$ memory), and weak ($\sim 15\%$ memory) clients, datasets (CIFAR-10, FEMNIST, and IMDB), and models (ResNet20, CNN, and LSTM). Our empirical study shows that \texttt{EmbracingFL} consistently achieves high accuracy as like all clients are strong, outperforming the state-of-the-art width reduction methods (i.e. HeteroFL and FjORD).
\end{abstract}
\vspace{0.35cm}

  \end{@twocolumnfalse} 
] 


\section{Introduction}

To adopt Federated Learning (FL) \cite{mcmahan2017communication} in real-world applications, enabling weak devices to participate in distributed training is crucial.
One common assumption in FL is that all individual clients train a local model that is the same as the global model \cite{li2020federated,diao2020heterofl,li2021survey,liu2022no,zhang2022federated}.
However, many edge devices such as mobile phones and IoT devices likely have heterogeneous system resources, and such an assumption is not practical in FL environments.
If the model is large, for example, some weak devices may not even be able to hold the full model in the memory space and cannot join the training.
Therefore, to exploit all the available client-side data, enabling weak client participation is critical.

Several recent works proposed FL strategies that enable clients to train different local models.
Knowledge distillation techniques \cite{sodhani2020closer,bistritz2020distributed,lin2020ensemble,cho2023communication} and FedHe \cite{chan2021fedhe} allow the clients to have different models and exchange only their output logits.
However, they work only when the number of clients is small or the model is personalized.
FedHM \cite{yao2021fedhm} employs a low-rank approximation technique to support various local model sizes.
While it shows a promising heterogeneous FL performance, it suffers from the expensive and frequent model factorizations on the server side.
FedPT \cite{sidahmed2021efficient} shows that freezing a large portion of the model and training only the rest of it still achieves good accuracy.
This approach requires all clients to reconstruct the same full model from a pre-defined random seed.
HeteroFL \cite{diao2020heterofl} and FjORD \cite{horvath2021fjord} commonly reduce the width of each network layer and assign such small models to weak clients.
While this \textit{width reduction} method alleviates the weak clients' workload, they suffer from a substantial accuracy drop when there are many weak clients.

\begin{figure*}[t]
\centering
\includegraphics[width=1.8\columnwidth]{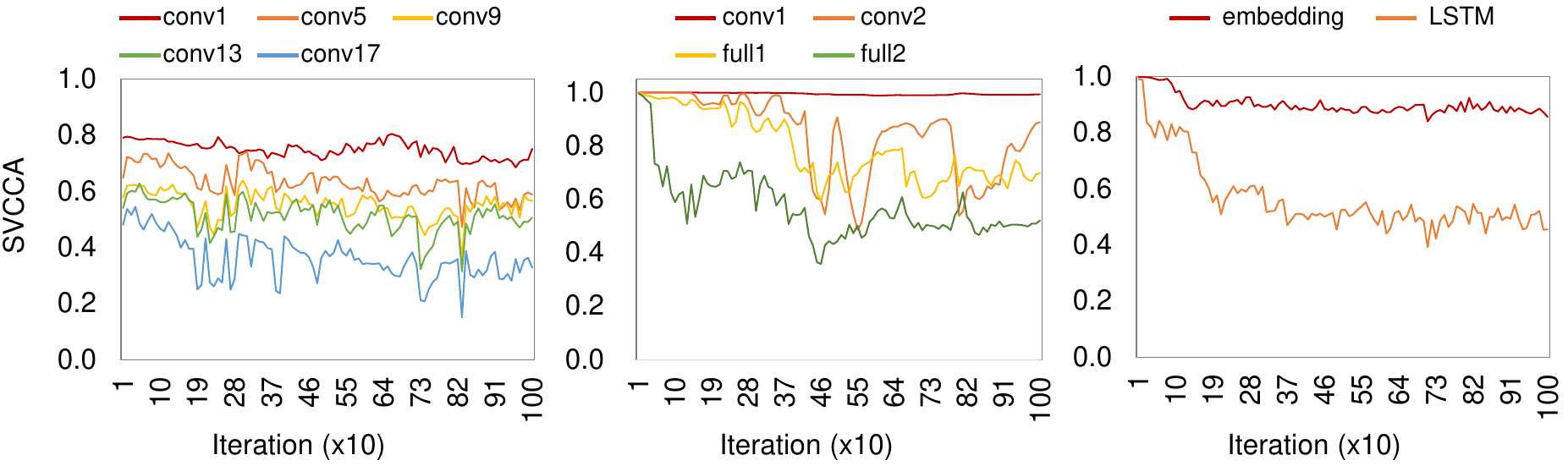}
\caption{
    The layer-wise maximum SVCCA (data representation similarity) among 128 clients.
    The SVCCA is measured from CIFAR-10 training of ResNet20 (left), FEMNIST training of CNN (middle), and IMDB training of LSTM (right).
    All 128 local models are independently trained for 1,000 iterations without synchronizations.
}
\label{fig:svcca}
\end{figure*}

We find that the core problem these previous works try to address can be generalized as follows:
\textit{``If a client is too weak to train the full model, how could it contribute to the training?''}.
The answer to this fundamental question will allow us to design an effective and practical FL solution that exploits heterogeneous systems.

One intriguing finding of our empirical study is that neural network layers have distinguishable patterns in their output data.
Figure \ref{fig:svcca} shows layer-wise data representation similarity across clients measured from several benchmarks\footnote{The CIFAR-10 \cite{krizhevsky2009learning} and IMDB \cite{maas-EtAl:2011:ACL-HLT2011} datasets are non-IID based on the label values (Dirichlet distribution with $\alpha=0.1$). We also use LEAF version of FEMNIST \cite{caldas2018leaf} which is already provided as non-IID.}.
We quantify the similarity using Singular Vector Canonical Correlation Analysis (SVCCA) \cite{raghu2017svcca} that is known to effectively reveal the similarity of two given matrices (See Appendix).
First, we let 128 clients independently train their models without any model aggregations.
We periodically calculate SVCCA among all possible pairs of clients and find the maximum value.
The SVCCA is calculated using the same data samples not included in any local training datasets.
We see that the input-side layers show higher SVCCA values than the output-side layers in CIFAR-10, FEMNIST, and IMDB review classification experiments.
Furthermore, the first few layers consistently show a high degree of similarity during the training even without any synchronizations.
These observations provide a critical insight into which layers learn more `similar' knowledge across the clients than the other layers.

Motivated by the above observations, we propose \texttt{EmbracingFL}, a general FL framework that leverages a layer-wise partial training method to enable weak client participation.
The core idea is to assign an output-side consecutive subset of network layers to the weak clients and let them perform the backward pass within the assigned layers only.
Since the local models mostly learn similar knowledge across the clients at the input-side layers, as shown in Figure \ref{fig:svcca}, the weak clients can most effectively contribute to the global model training when they focus on the output-side layers.
To the best of our knowledge, our study explains why the output-side layers should be assigned to weak clients and analyzes the benefits of such an approach for the first time.
We support our arguments by analyzing the impact of various partial model synchronization schemes on the data representation similarity.
\texttt{EmbracingFL} also enables us to directly average the local models across the clients because all the clients always view the same model architecture.

Our theoretical analysis shows that \texttt{EmbracingFL} guarantees convergence to a neighbor region of a stationary point for non-convex smooth problems under a piece-wise Lipschitz continuity assumption.
The analysis examines the trade-off between the reduced workload at the weak client and the impact on the convergence rate of the global loss.
The key result is that \texttt{EmbracingFL} converges to a close neighborhood of a minimum regardless of the number of weak clients and how many layers are assigned to them.
This solution bias is caused by the input-side layers that are trained by the strong clients only.
A similar analysis for strongly convex problems is shown in \cite{cho2020client}.
This performance guarantee allows weak clients to flexibly choose how many layers to train based on their available system resources.

\texttt{EmbracingFL} shows remarkably improved FL performance on heterogeneous clients as compared to the state-of-the-art width reduction methods (HeteroFL and FjORD).
Under a variety of settings with a mixed number of strong, moderate ($\sim 40\%$ memory footprint), and weak ($\sim 15\%$ memory footprint) clients, datasets (CIFAR-10, FEMNIST \cite{caldas2018leaf}, and IMDB review), and models (ResNet20, CNN, and LSTM \cite{hochreiter1997long}), \texttt{EmbracingFL} consistently achieves high accuracy as like all the clients are strong.
For example, when $50\%$ of clients are weak, \texttt{EmbracingFL} shows CIFAR-10 validation accuracy close to the strong client-only accuracy ($< 0.1\%$ difference) while the state-of-the-art width reduction methods achieve $7.1\%$ lower accuracy.
The accuracy improvement becomes even more substantial as the fraction of weak clients increases ($22.6\%$ higher accuracy when $87.5\%$ of clients are weak, See Appendix).
We also discuss other benefits of the proposed method, such as the reduced backward pass time on real edge devices and the resilience to inaccurate batch normalization statistics.
\section {Related Work}
\textbf{Federated Learning on Heterogeneous Clients} --
A few recent research works discussed how to utilize heterogeneous clients in FL.
Knowledge distillation techniques \cite{sodhani2020closer,bistritz2020distributed,lin2020ensemble,cho2023communication} and FedHe \cite{chan2021fedhe} enable the clients to train different local models and exchange their knowledge (output logits).
However, this approach effectively works only when the number of clients is small or the model is personalized to each client.
HeteroFL \cite{diao2020heterofl} and FjORD \cite{horvath2021fjord} directly reduce the width of every layer for weak clients.
While this static parameter dropping improves the accuracy compared to the random parameter dropping \cite{wen2022federated}, the accuracy loss is still not negligible.

\begin{figure*}[t]
\centering
\includegraphics[width=1.7\columnwidth]{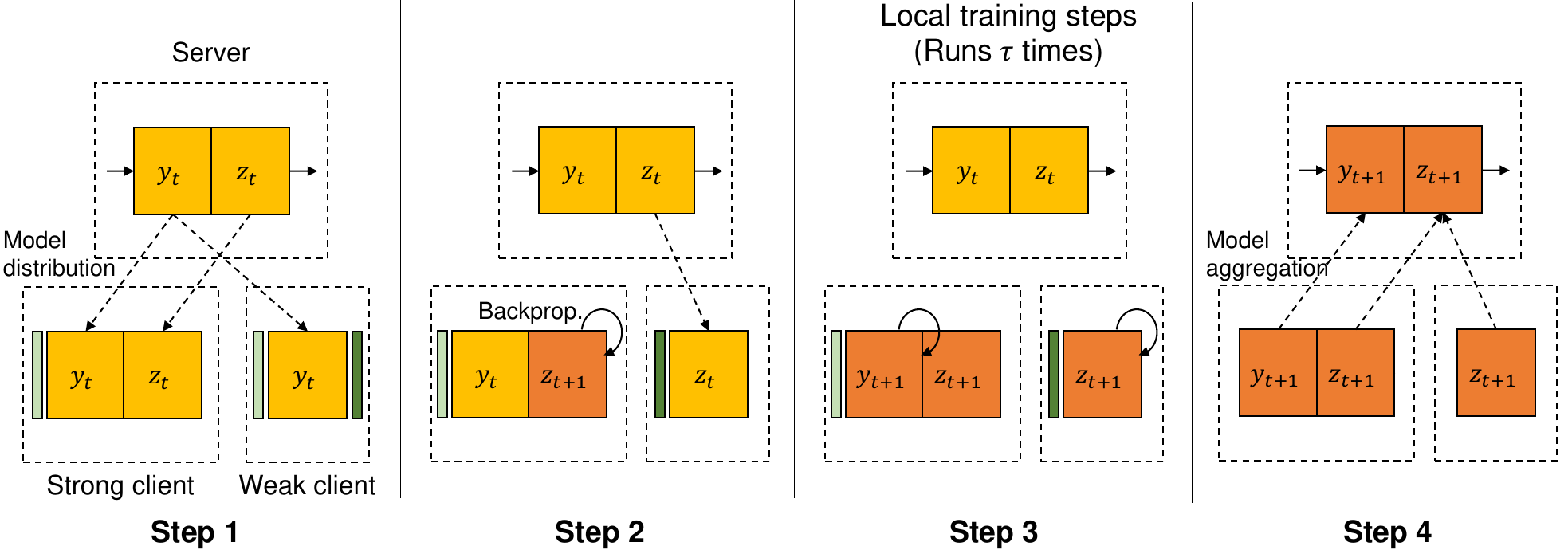}
\caption{
    The schematic illustration of \texttt{EmbracingFL}. While the \textit{strong} client trains the whole model parameters, the \textit{weak} client trains an output-side subset of layers only. The \textit{weak} client can determine how many layers to train based on its available resource capacity.
}
\label{fig:schematic}
\end{figure*}

Several model decomposition methods have also been studied to tackle the heterogeneous system issue in FL.
FedHM \cite{yao2021fedhm} reduces the communication and computational cost on the client side by assigning a model with a reduced rank.
However, FedHM was evaluated using a small number of clients (20), and the accuracy improvement over HeteroFL is smaller than $1.5\%$.
More recently, Mei et al. proposed FLANC, a general model decomposition framework for FL \cite{mei2022resource}.
While this previous work shows promising results, the performance of FLANC under extremely heterogeneous FL environments is still not guaranteed.
For example, the authors evaluate the performance of FLANC when only up to $25\%$ of clients are weak (their resource capacity is $25\%$ of the strong devices).

Some other FL methods also consider heterogeneous clients.
Chen et al. consider heterogeneous system resources across devices and propose an asynchronous model aggregation scheme to utilize those different IoT devices \cite{chen2021towards}.
However, they still assume that every device is strong enough to individually train the target model.
InclusiveFL \cite{liu2022no} is based on a similar principle as our proposed method, which reduces the depth of networks for weak clients.
The key difference between InclusiveFL and our method is the layer selection criteria.
Specifically, they remove the output-side layers for the weak clients.
Interestingly, our study finds that the output-side layers learn more critical information while this previous work argues the opposite.
We will discuss this in Section \ref{sec:impact}.

\textbf{Reducing Client-Side Workload in Federated Learning} --
ResIST \cite{dun2021resist} proposes a local SGD-based distributed learning method that reduces the client-side computational workload.
The model is split into several subsets of consecutive layers, and every client randomly samples a subset and trains it locally.
While this sub-model shuffling approach reduces the client-side computational cost, its effectiveness was not validated in a realistic FL scale.
The performance was evaluated using up to 8 clients only.
FedPara \cite{hyeon2021fedpara} employs low-rank factorization to reduce the model size.
This SOTA factorization method reduces the communication cost, however, the computational cost and the memory footprint are not reduced.
The frequent model factorizations can also cause an extra computational cost on the server side.
SplitFed \cite{thapa2020splitfed} combines FL and Split Learning \cite{gupta2018distributed} to reduce the client-side workload.
It assumes there exists an extra server that can calculate the partial model gradient.
However, the distributed backprop. potentially has an expensive latency cost caused by frequent communications in edge device environments.
In addition, direct activation exchanges may have privacy issues under typical FL environments.
FedPrune \cite{munir2021fedprune} proposes a random pruning of the network to facilitate weak clients.
While this approach reduces the workload for weak clients, it is rather data-driven and the resource capacity is not considered when pruning the network.
To exploit many different edge devices in real-world FL environments, an FL strategy must be aware of the available system resources and determine the sub-model size.

\textbf{Layer-Wise Model Training} --
A few studies have demonstrated the benefits of a layer-wise partial training approach.
Belilovsky et al. adopt the greedy layer-wise training method \cite{bengio2006greedy} to modern image classification tasks \cite{belilovsky2019greedy}.
This work introduces a promising scaling method that builds up a large model based on several shallow models.
Model freezing methods \cite{he2021pipetransformer,yang2023efficient} and progressive learning methods \cite{li2021progressive,li2022automated} are well aligned with this approach.
However, the efficacy of these existing methods has not been validated in the FL context.
FedMA employs a matched averaging method to build up a large model by averaging layers with common features \cite{wang2020federated}.
Ma et al. propose a layer-wise model aggregation method for personalized Federated Learning \cite{ma2022layer}.
FedLAMA is a communication-efficient layer-wise distributed learning method \cite{lee2023layer}.
Lee et al. discussed the impact of the partial model synchronization on the convergence rate in the FL context \cite{lee2023partial}.
All these existing works commonly show how to effectively leverage a partial model training method in Federated Learning.
However, they do not consider the system heterogeneity issue that is common in realistic FL environments.
The aforementioned FLANC enables clients to train a local model with a different width \cite{mei2022resource}.
FLANC decomposes each layer into two subsequent layers, and the first layer has the same dimensions across all the clients while the second layer has client-specific dimensions.
The second layers are synchronized only among the clients with the same capacity.
We will discuss the impact of such partial model synchronizations in Section \ref{sec:impact}.

\section {Embracing Federated Learning Framework}
In this section, we describe \texttt{EmbracingFL} framework, the layer-wise partial model training strategy for large-scale FL on heterogeneous systems.
We define \lq{}\textit{strong}\rq{} client as an edge device in which the full model can be locally trained independently of other clients.
We also define \lq{}\textit{weak}\rq{} client as an edge device in which the full model cannot be effectively trained due to several possible reasons such as limited memory space or weak computing power.
For simplicity, we define the layers assigned to the weak clients as $\mathbf{z}$ and all the other layers as $\mathbf{y}$.
The full model can be obtained by concatenating these two sub-models: $\mathbf{x} = (\mathbf{y}, \mathbf{z})$.

\subsection {Layer-wise Partial Training Strategy}

We describe \texttt{EmbracingFL} focusing on how to enable \textit{weak} clients to join the global model training.
The goal of \texttt{EmbracingFL} is to enable \textit{weak} clients not to hold the full model in its memory space at any moment.
Figure \ref{fig:schematic} depicts the schematic illustration of \texttt{EmbracingFL}.

\textbf{Multi-step Forward Pass} --
Each communication round begins with a special type of forward pass at the \textit{weak} client.
First, the \textit{weak} client receives the input-side layers from the server as many as its memory space allows (step 1).
The exact number of layers can be pre-defined based on the client-side system capacity.
Then, it performs the forward pass through the received layers using the local dataset and records the output activation matrix.
The output matrix can be stored in either memory space or disk space.
Once the output is recorded, the weak client can discard the current layers and receive the next layers from the server (step 2).
Using the recorded matrices as input data, the client continues the forward pass.
The above steps are repeated until the weak client receives the last output-side layers from the server.

We name the forward pass described above \lq{}multi-step forward pass\rq{}.
Algorithm \ref{alg:MSFP} shows the pseudo-code of the multi-step forward pass.
At the end of the repeated forward passes, the \textit{weak} client ends up having the last set of output-side layers, and it is ready to run local training steps.
Note that the multi-step forward pass is performed only once per communication round.
After the activation matrices are recorded right before the last output-side sub-model, the recorded matrices are re-used multiple times during local training steps.
This design choice allows us to significantly reduce the memory footprint and the computational cost at the \textit{weak} client while introducing slight bias into the solution.

\textbf{Local Training on Weak Clients} -- 
Algorithm \ref{alg:EmbracingFL} shows the \texttt{EmbracingFL} framework from the \textit{weak} client's perspective.
Since the \textit{strong} clients perform the full local training and model aggregation like FedAvg, we only focus on the \textit{weak} client's local training.
Once the multi-step forward pass is finished, the \textit{weak} client performs the local training steps using the recorded intermediate activation matrices as the input data (step 3).
The parameter update rule of the proposed strategy can be formally defined as follows.
For simplicity, we consider a simple case where there are $m$ clients in total and $s \leq m$ clients are \textit{strong}.
\begin{align}
    \mathbf{x_t} &= \left( \mathbf{y_t}, \mathbf{z_t} \right) \nonumber \\
    \mathbf{y_{t+1}} &= \mathbf{y}_t - \frac{\eta}{s} \sum_{i=1}^{s} \sum_{j=0}^{\tau - 1} \nabla f(\mathbf{y}_{t,j}^i) \nonumber \\
    \mathbf{z_{t+1}} &= \mathbf{z}_t - \frac{\eta}{m} \sum_{i=1}^{m} \sum_{j=0}^{\tau - 1} \nabla f(\mathbf{z}_{t,j}^i), \nonumber
\end{align}
where $\mathbf{y}$ is the input-side layers trained by $s$ strong clients only and $\mathbf{z}$ is all the other output-side layers that are trained by all $m$ clients.
Note that, because the weak clients do not perform the backward pass at $\mathbf{y}$, $\nabla f(\mathbf{y}_t^i)$ are averaged across the $s$ strong clients only.
Finally, the locally trained layers are aggregated at the server and averaged to obtain the global model parameters (step 4).

\begin{algorithm}[t]
\caption{
    Multi-Step Forward Pass.
}
\label{alg:MSFP}
\begin{algorithmic}
    \STATE \textbf{Input:} $D^i$: local dataset
    \STATE $\mathbf{y}_{t,0}^i = \mathbf{y}_{t}$ \COMMENT{Receive $\mathbf{y}_{t,0}^{i}$ from the server.}
    \STATE $\bar{D}^i = \textrm{Feed-Forward}(\mathbf{y}_{t,0}^{i}, D^i)$  \COMMENT{Store the output of $\mathbf{y}_{t,0}^{i}$}
    \STATE $\mathbf{z}_{t,0}^i = \mathbf{z}_{t}$ \COMMENT{Discard $\mathbf{y}_{t,0}^i$ and receive $\mathbf{z}_{t,0}^{i}$.}
    \STATE \textbf{Output:} $\mathbf{z}_{t,0}^i, \bar{D}^i$
\end{algorithmic}
\end{algorithm}

\begin{algorithm}[t]
\caption{
    \texttt{EmbracingFL} (Weak Clients Training).
}
\label{alg:EmbracingFL}
\begin{algorithmic}
    \STATE{\textbf{Input:} $\tau$: the aggregation interval, $T$: the number of communication rounds}
    \STATE{$\mathbf{x}_0 \leftarrow$ the initial global model}
    \FOR{$t \in \{0, \cdots, T-1 \}$}
        \STATE{$\mathbf{z}_{t,0}^{i}, \bar{D}^i = \textrm{MultiStepForwardPass}(D^i)$}
        \FOR{$j \in \{0, \cdots, \tau - 1\}$}
            \STATE{$\xi_i \leftarrow$ a random data sampled from $\bar{D}^i$}
            \STATE{$\mathbf{z}_{t,j+1}^i = \textrm{LocalUpdate}(\mathbf{z}_{t,j}^i, \xi_i)$}
        \ENDFOR
        \STATE{$\mathbf{z}_{t+1} = \frac{1}{m} \sum_{i=1}^{m} \mathbf{z}_{t}^i$} \COMMENT{Model aggregation}
    \ENDFOR
    \STATE{\textbf{Output:} $\mathbf{x}_{T}$}
\end{algorithmic}
\end{algorithm}

The proposed partial training strategy has two critical benefits as follows.
First, all the available clients can effectively participate in the global model training regardless of their system resources.
The weak clients can focus only on the assigned output-side sub-model during the backward pass.
Thus, the memory footprint as well as the computational workload is proportionally reduced, and it enables the weak clients to join the federated optimization process.
As already shown in several previous works, the gradient computation complexity is proportional to the number of layers and the filter size \cite{lee2017parallel,asghar2019assessment}.
Because our method directly removes trainable layers for the \textit{weak} clients, their gradient computation cost is expected to be proportionally reduced.
Second, the proposed multi-step forward pass allows all the clients to view the same model architecture, and thus the locally trained network layers can be directly averaged across all the clients.
This property dramatically eases the implementation complexity.
This also allows us to theoretically analyze the convergence properties of the proposed strategy, providing a convergence guarantee.

\begin{figure*}[t]
\centering
\includegraphics[width=1.8\columnwidth]{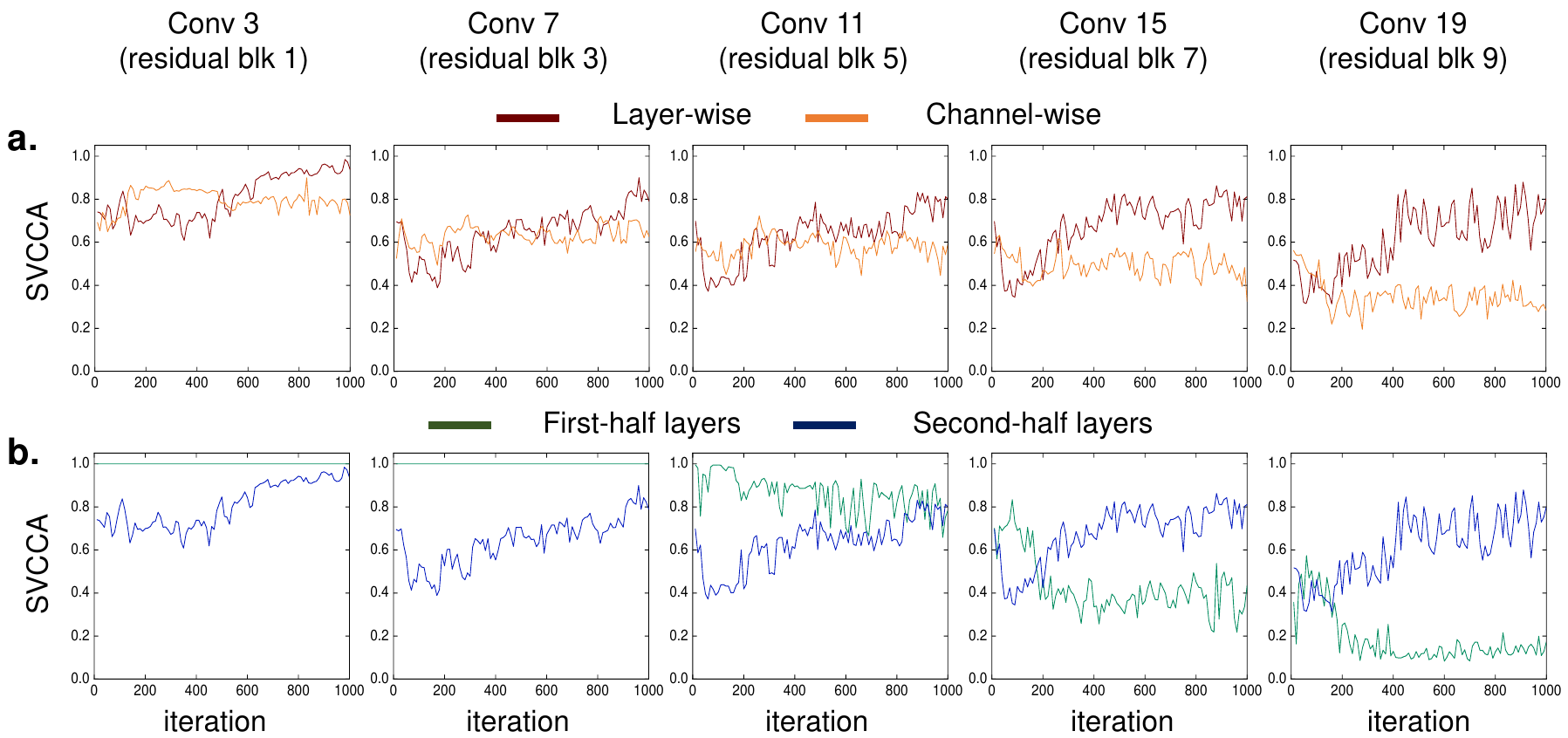}
\caption{
    The SVCCA comparison across different partial synchronization strategies.
    ResNet20 is trained on non-IID CIFAR-10 for 1000 iterations.
    The maximum SVCCA is measured among 128 local models.
    \textbf{a}. The SVCCA comparison between \textit{layer-wise} and \textit{channel-wise} approaches.
    \textbf{b}. The SVCCA comparison between \textit{input-side} layer-wise and \textit{output-side} layer-wise approaches.
}
\label{fig:svcca2}
\end{figure*}

\subsection {Ablation Study on Partial Model Synchronization} \label{sec:impact}
To better understand the impact of partial model synchronization on the model discrepancy across the clients, we analyze how the data representation similarity evolves during training.
We first compare \textit{layer-wise} and \textit{channel-wise} schemes.
The \textit{layer-wise} approach synchronizes the output-side half layers while the \textit{channel-wise} synchronizes half channels at every layer.
We train ResNet20 on non-IID CIFAR-10 using 128 clients and partially synchronize the model after every 10 local training steps.
The SVCCA is measured using the validation dataset after every model synchronization.
Figure \ref{fig:svcca2}.\textbf{a}. shows the similarity comparisons.
The \textit{layer-wise} approach shows a higher degree of similarity than the \textit{channel-wise} at all five layers.
Especially at the output-side layers (Conv 15 and Conv 19), there is a large gap between the two approaches.
This observation strongly supports our design choice of layer-wise partial synchronization in \texttt{EmbracingFL}.

We also present the data similarity comparison between two different \textit{layer-wise} schemes: the \textit{first-half} approach and the \textit{second-half} approach, as shown in Figure \ref{fig:svcca2}.\textbf{b}.
Note that the \textit{first-half} approach can be considered as \textit{InclusiveFL} proposed in \cite{liu2022no}.
The \textit{first-half} always shows the SVCCA values of $1$ at the input-side layers (Conv 3 and Conv 7) because the corresponding parameters are identical across all the clients.
However, the \textit{first-half} rapidly loses the similarity at the output-side layers as the training progresses while the \textit{second-half} approach maintains the similarity.
This empirical study demonstrates that the overall degree of model discrepancy heavily depends on how fast the output-side layers become to have different data representations.
In \texttt{EmbracingFL}, the output-side layers are trained by all the clients and globally synchronized.
Therefore, even though the \textit{weak} clients contribute to the output-side sub-model only, the overall model discrepancy is expected to be considerably reduced by the proposed layer-wise partial model training method.

\subsection {Convergence Analysis} \label{sec:theory}
Herein, we provide a theoretical analysis of the convergence properties of \texttt{EmbracingFL}.
We consider federated optimization problems as follows.
\begin{align}
    \underset{\mathbf{x} \in \mathbb{R}^d}{\min}\left[F(\mathbf{x}) := \frac{1}{m} \sum_{i=1}^{m} F_i(\mathbf{x}) \right] \label{eq:objective},
\end{align}
where $m$ is the number of local models and $F_i(\mathbf{x}) = \mathop{\mathbb{E}}_{\xi_i \sim D_i}[ F_i(\mathbf{x}, \xi_i)]$
is the local objective function associated with local data distribution $D_i$.
We define $\nabla_k F_i(\mathbf{x}, \xi_i)$ as the stochastic gradient with respect to the model partition $k$ and client $i$'s local data $\xi_i$. We omit $\xi_i$ for simplicity.

For simplicity, we consider the case where the clients are either \textit{strong} or \textit{weak}.
This analysis can be easily extended to more general cases with various client capacities.
It is also trivial to extend our analysis to the mini-batch version of FedAvg.
The analysis is centered around the following common assumptions.\\
\textbf{Assumption 1.} Each local objective function is $L_k$-smooth, $\forall k \in \{y, z\}$, that is, $\| \nabla_k F_i (\mathbf{x}) - \nabla_k F_i (\mathbf{x'}) \| \leq L_k \| \mathbf{x} - \mathbf{x'} \|, \forall i \in \{ 1, \cdots, m \}$ and $\forall k \in \{y, z\}$. \\
\textbf{Assumption 2.} The local gradient is an unbiased estimator of the local full-batch gradient for all model partitions: $\mathop{\mathbb{E}}_{\xi_i} \left[ \nabla_k f(\mathbf{x}_{t,j}^{i}) \right] = \nabla_k F_i(\mathbf{x}_{t,j}^{i}), \forall i \in \{ 1, \cdots, m \}$ and $\forall k \in \{y, z\}$.\\
\textbf{Assumption 3.} The gradient at each client has bounded variance: $\mathop{\mathbb{E}}_{\xi_i} \left[ \| \nabla_k f(\mathbf{x}_{t,j}^{i}) - \nabla_k F_i (\mathbf{x}_{t,j}^{i}) \|^2 \right] \leq \sigma_k^2, \forall i \in \{ 1, \cdots, m \}$ and $\forall k \in \{y, z\}$.
Likewise, the full batch gradient has bounded variance: $\mathop{\mathbb{E}} \left[ \| \nabla_k F_i(\mathbf{x}_{t,j}^{i}) - \nabla_k F (\mathbf{x}_{t,j}^{i}) \|^2 \right] \leq \bar{\sigma}_k^2, \forall i \in \{ 1, \cdots, m \}$ and $\forall k \in \{y, z\}$.

\begin{table*}[ht!]
\footnotesize
\centering
\caption{
    The model size of the three different types of clients.
}
\label{tab:size}
\begin{tabular}{llll} \toprule
Removed layers of Resnet20 (CIFAR-10) & Number of parameters ($p$) & Number of activations ($a$) & Capacity \\ \midrule
(Strong) - & 272,762 & 6,947,136 & 1.00 \\ 
(Moderate) The first conv. layer + the first 3 residual blocks & 257,994 & 2,752,832 & 0.42 \\
(Weak) The first conv. layer + the first 6 residual blocks & 206,346 & 917,824 & 0.16 \\ \midrule
Removed layers of CNN (FEMNIST) & Number of parameters ($p$) & Number of activations ($a$) & Capacity \\ \midrule
(Strong) - & 6,603,710 & 39,742 & 1.00 \\ 
(Moderate) The first 2 conv. layers & 6,551,614 & 2,110 & 0.99 \\ 
(Weak) The first 2 conv. layers + one dense layer & 127,038 & 62 & 0.02 \\ \midrule
Removed layers of Bidirectional LSTM (IMDB) & Number of parameters ($p$) & Number of activations ($a$) & Capacity \\ \midrule
(Strong) - & 3,611,137 & 1,310,720 & 1.00 \\ 
(Moderate) The embedding layer & 1,051,137 & 1,310,720 & 0.48 \\ 
(Weak) The embedding layer and the first half of the words (128) & 1,051,137 & 655,360 & 0.35 \\ \bottomrule
\end{tabular}
\end{table*}

Then, our analysis of the convergence rate is as follows.
The proof is in the Appendix \ref{app:proof}.
\begin{theorem}
\label{theorem:main}
Suppose all $m$ local models are initialized to the same point $\mathbf{x}_0$. Under Assumption $1 \sim 3$, if Algorithm \ref{alg:EmbracingFL} runs for $T$ communication rounds and $\eta \leq \min{ \left\{ \frac{1}{\tau L_{max}}, \frac{1}{4 L_{max} \sqrt{\tau (\tau - 1)}} \right\} }$, the average-squared gradient norm of $\mathbf{x}_t$ is bounded as follows.
\begin{align}
    \frac{1}{T} \sum_{t=0}^{T-1} \mathop{\mathbb{E}}\left[ \left\| \nabla F(\mathbf{x}_t) \right\|^2 \right] & \leq \frac{14}{3 T \eta \tau} \left( F(\mathbf{x}_0) - F(\mathbf{x}_{*}) \right) \nonumber \\
    & + \left( \frac{7 L_y \eta}{3s} + \frac{16 L_y^2 \eta^2 (\tau - 1)}{3} \right) \sigma_y^2  \nonumber \\
    & + \left( \frac{14}{3s} + \frac{64 L_y^2 \eta^2 \tau (\tau - 1)}{3} \right) \bar{\sigma}_y^2 \nonumber \\
    & + \left( \frac{7L_z \eta}{3m} + \frac{8 L_z^2 \eta^2 (\tau - 1)}{3} \right) \sigma_z^2 \nonumber \\
    & + \left( \frac{32 L_z^2 \eta^2 \tau (\tau - 1)}{3} \right) \bar{\sigma}_z^2
\end{align}
\end{theorem}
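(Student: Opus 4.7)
The plan is to adapt the standard FedAvg descent-plus-drift analysis in a block-wise fashion, since the key asymmetry of \texttt{EmbracingFL} is that $\mathbf{y}$ is updated only by the $s$ strong clients while $\mathbf{z}$ is updated by all $m$ clients. I would first invoke Assumption~1 block by block to obtain the per-round descent inequality
\begin{align}
F(\mathbf{x}_{t+1}) &\le F(\mathbf{x}_t) + \langle \nabla_y F(\mathbf{x}_t),\, \mathbf{y}_{t+1}-\mathbf{y}_t \rangle \nonumber \\
&\quad + \langle \nabla_z F(\mathbf{x}_t),\, \mathbf{z}_{t+1}-\mathbf{z}_t \rangle \nonumber \\
&\quad + \tfrac{L_y}{2}\,\| \mathbf{y}_{t+1}-\mathbf{y}_t \|^2 \nonumber \\
&\quad + \tfrac{L_z}{2}\,\| \mathbf{z}_{t+1}-\mathbf{z}_t \|^2,
\end{align}
and then take conditional expectations over the stochastic samples using Assumption~2. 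Substituting the \texttt{EmbracingFL} update rule brings in two averages of local stochastic gradients, one over $s$ strong clients for $\mathbf{y}$ and one over all $m$ clients for $\mathbf{z}$.

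Next I would treat each inner-product term using the identity $-\langle a,b\rangle=\tfrac{1}{2}(\|a-b\|^2-\|a\|^2-\|b\|^2)$. For the $\mathbf{z}$-block this reproduces the usual FedAvg telescoping, with a gradient-noise contribution that scales as $\sigma_z^2/m$. For the $\mathbf{y}$-block the key twist is that only strong clients participate, so the conditional mean of the update direction equals $\tfrac{1}{s}\sum_{i=1}^{s}\nabla_y F_i(\mathbf{x}_{t,j}^{i})$ rather than $\nabla_y F(\mathbf{x}_{t,j}^{i})$. Adding and subtracting $\nabla_y F(\mathbf{x}_{t,j}^{i})$ and invoking Assumption~3 produces the non-vanishing bias proportional to $\bar{\sigma}_y^2/s$ that appears in the final bound; the $1/s$ factor comes from the standard variance-of-an-average calculation applied to the $s$ per-client heterogeneity terms, and the analogous calculation on the gradient noise yields the $\sigma_y^2/s$ contribution.

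The remaining piece is the client-drift bound on $\mathbb{E}\|\mathbf{x}_{t,j}^{i}-\mathbf{x}_t\|^2$, done separately per block. For the $\mathbf{z}$-block this is the standard FedAvg drift lemma on each local trajectory: unrolling $\tau$ local SGD steps, applying $L_z$-smoothness together with Assumption~3, and using a step size small enough to absorb a $(1-c\eta^2 L_z^2\tau)$ factor, yields a bound of order $\eta^2\tau(\tau-1)(\sigma_z^2+\bar{\sigma}_z^2)+\eta^2\tau^2\|\nabla_z F(\mathbf{x}_t)\|^2$. For the $\mathbf{y}$-block the same argument applies to each strong client, while weak clients contribute zero drift on $\mathbf{y}$ because their $\mathbf{y}$-coordinates remain frozen at $\mathbf{y}_t$ throughout the round; this is precisely why the drift coefficients on $\sigma_y^2$ and $\bar{\sigma}_y^2$ in the theorem do not pick up an additional $1/s$.

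Finally, I would substitute the drift bound back into the descent inequality, choose $\eta$ small enough that the $\|\nabla F(\mathbf{x}_t)\|^2$ terms generated on the right can be absorbed into a fixed fraction of the corresponding term on the left (this is what motivates the two step-size conditions $\eta \le 1/(\tau L_{\max})$ and $\eta \le 1/(4L_{\max}\sqrt{\tau(\tau-1)})$), rearrange, sum from $t=0$ to $T-1$, and divide by $T\eta\tau$, telescoping $F(\mathbf{x}_t)-F(\mathbf{x}_{t+1})$ down to $F(\mathbf{x}_0)-F(\mathbf{x}_*)$. The main obstacle is the bookkeeping: carrying the block-wise constants $L_y,L_z,\sigma_y,\sigma_z,\bar{\sigma}_y,\bar{\sigma}_z$ in parallel without mixing them, tracking that the $\mathbf{y}$-update involves only $s$ of the $m$ clients so that the irreducible $\bar{\sigma}_y^2/s$ bias (which prevents convergence to an exact stationary point) is correctly identified, and tuning the absorption constants so that after telescoping the estimate collapses cleanly into the five-term form stated in Theorem~\ref{theorem:main}.
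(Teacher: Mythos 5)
Your proposal follows essentially the same route as the paper's proof: a block-wise smoothness descent inequality over $(\mathbf{y},\mathbf{z})$, the inner-product identity to isolate the partial-participation bias (adding and subtracting the strong-client average so that Assumption~3 yields the irreducible $\bar{\sigma}_y^2/s$ term), per-block client-drift lemmas unrolled over the $\tau$ local steps, and absorption of the resulting $\|\nabla F(\mathbf{x}_t)\|^2$ terms under the two stated step-size conditions before telescoping. The only quibble is your explanation of why the drift coefficients carry no $1/s$ factor — in the paper this is because the drift is an average of per-client squared deviations each bounded by the same constant, not because weak clients are frozen on $\mathbf{y}$ — but this does not affect the validity of the argument.
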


\textbf{Remark 1.}
Our analysis shows that the average magnitude of the gradient is bounded by a constant.
Given a fixed learning rate, the right-hand side becomes a constant error bound at convergence (when $T \rightarrow \infty$).
We first see that \texttt{EmbracingFL} converges regardless of how many \textit{weak} clients participate in the training.
The convergence is also guaranteed regardless of how many layers are assigned to the \textit{weak} clients.
This guarantee is a critical benefit of \texttt{EmbracingFL}, which enables weak clients to flexibly choose how many layers to train.
Cho et al. have shown that convergence is guaranteed for strongly convex problems even if the whole model is biased \cite{cho2020client}.

\textbf{Remark 2.}
The heterogeneous partial model training method causes a sub-linear speedup.
Even with a diminishing learning rate $\eta = \sqrt{\frac{m}{T}}$, the $\frac{14}{3s}\bar{\sigma}_y$ term on the right-hand side ends up dominating all the other terms as $T$ increases.
This implies that the model converges to a certain region close to a stationary point rather than the exact point.
The more the \textit{strong} clients, the closer the model approaches to the exact minimum.
When the number of \textit{strong} clients increases, the fraction of $\mathbf{y}$ decreases, and it ends up making the model consist only of $\mathbf{z}$.
In that case, $\sigma_y$ and $\bar{\sigma}_y$ become $0$.
Under this homogeneous setting, if $\eta = \sqrt{\frac{m}{T}}$, the complexity becomes $O(\frac{1}{\sqrt{mT}}) + O(\frac{m}{T})$.
Thus, if $T > m^3$, it achieves linear speedup.
This result is consistent with the FedAvg analysis shown in many previous works \cite{wang2021cooperative,wang2020tackling,asad2020fedopt,yang2021achieving}.

\section {Experiments} \label{sec:setting}

\textbf{Datasets, Models, and Hyper-Parameters} --
We evaluate the performance of \texttt{EmbracingFL} using three popular benchmarks, CIFAR-10 (ResNet20), Federated MNIST (FEMNIST) \cite{caldas2018leaf} (CNN), and IMDB review (LSTM) (See Appendix for detailed settings).
The training runs for 10,000, 2,000, and 4,000 local steps, respectively.
In all experiments, the local batch size is 32 and the number of local steps per communication round ($\tau$) is 10.

\textbf{Non-IID Data Settings} --
We generate non-IID versions of CIFAR-10 and IMDB using Dirichlet distribution ($\alpha = 0.1$).
The data samples of each label are distributed based on a specific Dirichlet distribution such that each local dataset has unbalanced labels.
Consequently, each local dataset contains a different number of samples.
Given 3,500 writers' samples in FEMNIST, we assign two random writers' samples to each client.
We report accuracy averaged across at least three runs.

\textbf{Client Capacity Settings} -- 
We define three client types, \textit{strong}, \textit{moderate}, and \textit{weak}.
Table \ref{tab:size} shows their model sizes.
We quantify the memory footprint of a \textit{weak} client as $2p_w + 2a_w$, where $p_w$ and $a_w$ are the numbers of model parameters and that of activations (errors) in the \textit{weak} model, respectively.
Then, we define \textit{Capacity} of the \textit{weak} client as $C_w = (2p_w + 2a_w) / (2p + 2a)$, where $p$ and $a$ are the number of parameters and that of activations in the full model, respectively.
This ratio shows how much of the memory space is used by the \textit{weak} client as compared to the \textit{strong} client.
The \textit{Capacity} column in Table \ref{tab:size} shows such ratios.

Note that the \textit{moderate} clients in FEMNIST experiment have an almost similar capacity to the \textit{strong} clients.
The convolutional neural network designed for FEMNIST classification consists only of four layers, and the first fully-connected layer takes up more than $90\%$ of the total model parameters.
Thus, there is no possible way of splitting such a shallow model to have a capacity of $0.4$.
We drop the first convolution layer for the \textit{moderate} clients but its memory consumption is still $99\%$ of that of the \textit{strong} clients' model.
The \textit{weak} client's model does not contain the first fully-connected layer and thus it only has a capacity of $0.02$.
With this extremely unbalanced model-splitting, we can generate more realistic heterogeneous FL environments.

\begin{figure*}[t]
\centering
\includegraphics[width=2\columnwidth]{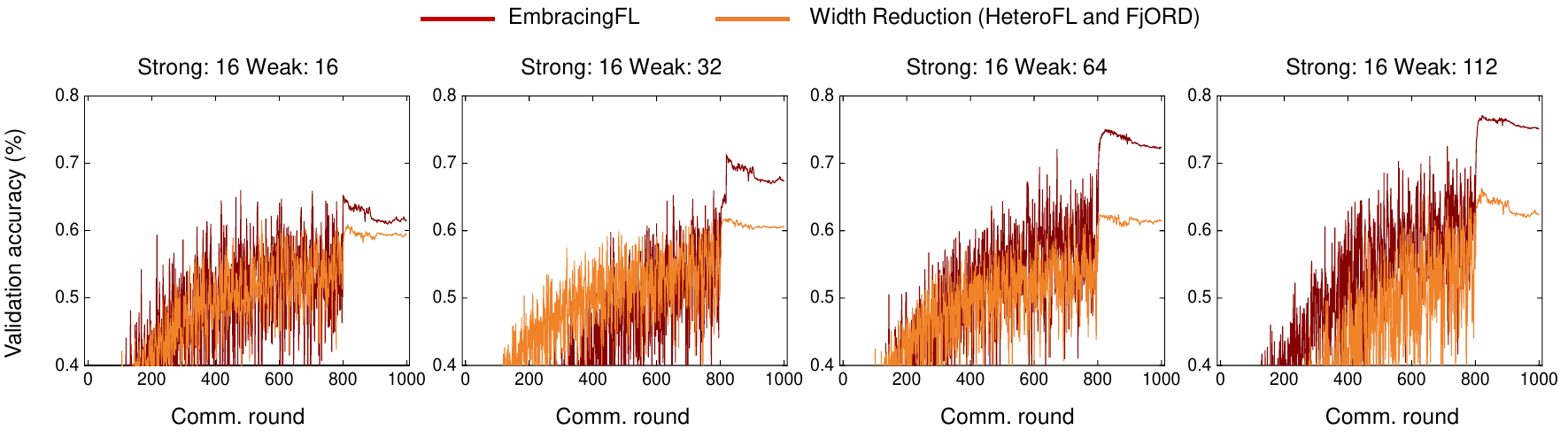}
\caption{
    The comparison of CIFAR-10 validation accuracy curves (\texttt{EmbracingFL} vs. Width Reduction). As the number of \textit{weak} clients increases, the accuracy gap between the two methods becomes more significant.
}
\label{fig:cifar10}
\end{figure*}

\subsection {Heterogeneous Scaling with Weak Clients}
We compare \texttt{EmbracingFL} with the state-of-the-art partial model training methods, HeteroFL \cite{diao2020heterofl} and FjORD \cite{horvath2021fjord}.
Essentially, HeteroFL and FjORD can be categorized into the same scheme which reduces the width of each network layer.
We call them \lq{}Width Reduction\rq{} method for short.
Note that the dynamic version of HeteroFL and the self knowledge distillation in FjORD are not directly related to the partial model training scheme.
That is, those additional features could be independently applied to \texttt{EmbracingFL} as well.
To focus only on the efficacy of different partial model training schemes, therefore, we only consider the static version of HeteroFL and the `ordered dropout' feature in FjORD.
We scale up CIFAR-10 (ResNet20) training by fixing the number of \textit{strong} clients to 16 and adding only \textit{weak} clients.
When using the width reduction method, the \textit{weak} clients drop $80\%$ of the channels at every layer (See Appendix \ref{sec:additional}).
This setting yields a similar capacity ($C \approx 0.18$) as the \textit{weak} clients in \texttt{EmbracingFL} ($C \approx 0.16$).
In this experiment, we enforce all 128 clients to always join the training.

\begin{table}[t]
\footnotesize
\centering
\caption{
    The CIFAR-10 scaling performance comparison between \texttt{EmbracingFL} and width reduction methods (HeteroFL \cite{diao2020heterofl} and FjORD \cite{horvath2021fjord}).
    Given a fixed iteration budget (10,000), as the number of \textit{weak} clients increases, \texttt{EmbracingFL} achieves consistently higher accuracy than Width Reduction.
}
\label{tab:scaling}
\begin{tabular}{rrcc} \toprule
\# of strong & \# of weak & Width Reduction & \texttt{EmbracingFL} \\ \midrule
16 & 0 & \multicolumn{2}{c}{$60.15\pm 1.5\%$} \\
16 & 16 & $61.34\pm 2.1\%$ & $\textbf{66.62}\pm 1.1\%$ \\
16 & 32 & $62.09\pm 1.5\%$ & $\textbf{72.60}\pm 1.2\%$ \\
16 & 64 & $63.68\pm 3.3\%$ & $\textbf{74.79}\pm 0.8\%$ \\
16 & 112 & $65.01\pm 2.9\%$ & $\textbf{77.34}\pm 1.6\%$ \\ \bottomrule
\end{tabular}
\end{table}

\begin{table*}[t]
\footnotesize
\centering
\caption{
    The non-IID CIFAR-10 classification performance under various heterogeneous FL settings.
}
\label{tab:cifar10_short}
\begin{tabular}{lrlrlrlcc} \toprule
& \multicolumn{2}{c}{Strong client} & \multicolumn{2}{c}{Moderate client} & \multicolumn{2}{c}{Weak client} & Avg. Capacity & Validation accuracy \\ \midrule
case 1 & 128 & $(100\%)$ & 0 & $(0\%)$ & 0 & $(0\%)$ & 1.00 & $80.45 \pm 0.2\%$ \\ \midrule
case 2 & 64 & $(50\%)$ & 64 & $(50\%)$ & 0 & $(0\%)$ & 0.71 & $80.20 \pm 0.2\%$ \\ 
case 3 & 32 & $(25\%)$ & 96 & $(75\%)$ & 0 & $(0\%)$ & 0.57 & $79.78 \pm 0.4\%$ \\
case 4 & 16 & $(12.5\%)$ & 112 & $(87.5\%)$ & 0 & $(0\%)$ & 0.49 & $79.72 \pm 0.6\%$\\ \midrule
case 5 & 64 & $(50\%)$ & 0 & $(0\%)$ & 64 & $(50\%)$ & 0.58 & $80.08 \pm 0.4\%$ \\ 
case 6 & 32 & $(25\%)$ & 0 & $(0\%)$ & 96 & $(75\%)$ & 0.37 & $78.91 \pm 0.5\%$ \\
case 7 & 16 & $(12.5\%)$ & 0 & $(0\%)$ & 112 & $(87.5\%)$ & 0.27 & $76.98 \pm 1.6\%$\\ \midrule
case 8 & 32 & $(25\%)$ & 32 & $(25\%)$ & 64 & $(50\%)$ & 0.44 & $80.33 \pm 0.6\%$ \\ 
case 9 & 16 & $(12.5\%)$ & 32 & $(25\%)$ & 80 & $(62.5\%)$ & 0.33 & $79.63 \pm 0.1\%$ \\
case 10 & 16 & $(12.5\%)$ & 16 & $(12.5\%)$ & 96 & $(75\%)$ & 0.30 & $77.63 \pm 0.2\%$ \\ \bottomrule
\end{tabular}
\end{table*}

\begin{table*}[t]
\footnotesize
\centering
\caption{
    The non-IID FEMNIST classification performance under various heterogeneous FL settings.
}
\label{tab:femnist_short}
\begin{tabular}{lrlrlrlcc} \toprule
& \multicolumn{2}{c}{Strong client} & \multicolumn{2}{c}{Moderate client} & \multicolumn{2}{c}{Weak client} & Avg. Capacity & Validation accuracy \\ \midrule
case 1 & 128 & $(100\%)$ & 0 & $(0\%)$ & 0 & $(0\%)$ & 1.00 & $81.27 \pm 0.6\%$ \\ \midrule
case 2 & 64 & $(50\%)$ & 64 & $(50\%)$ & 0 & $(0\%)$ & 0.99 & $81.80 \pm 0.5\%$ \\ 
case 3 & 32 & $(25\%)$ & 96 & $(75\%)$ & 0 & $(0\%)$ & 0.99 & $81.55 \pm 0.5\%$ \\
case 4 & 16 & $(12.5\%)$ & 112 & $(87.5\%)$ & 0 & $(0\%)$ & 0.99 & $80.44 \pm 0.7\%$\\ \midrule
case 5 & 64 & $(50\%)$ & 0 & $(0\%)$ & 64 & $(50\%)$ & 0.51 & $80.80 \pm 0.7\%$ \\ 
case 6 & 32 & $(25\%)$ & 0 & $(0\%)$ & 96 & $(75\%)$ & 0.27 & $80.78 \pm 0.5\%$ \\ 
case 7 & 16 & $(12.5\%)$ & 0 & $(0\%)$ & 112 & $(87.5\%)$ & 0.14 & $79.63 \pm 0.3\%$\\ \midrule
case 8 & 32 & $(25\%)$ & 32 & $(25\%)$ & 64 & $(50\%)$ & 0.51 & $81.29 \pm 0.4\%$ \\ 
case 9 & 16 & $(12.5\%)$ & 32 & $(25\%)$ & 80 & $(62.5\%)$ & 0.39 & $81.18 \pm 0.3\%$ \\
case 10 & 16 & $(12.5\%)$ & 16 & $(12.5\%)$ & 96 & $(75\%)$ & 0.26 & $81.17 \pm 0.4\%$ \\ \bottomrule
\end{tabular}
\end{table*}

\begin{table*}[ht!]
\footnotesize
\centering
\caption{
    The non-IID IMDB classification performance under various heterogeneous FL settings.
}
\label{tab:imdb_short}
\begin{tabular}{lrlrlrlcc} \toprule
& \multicolumn{2}{c}{Strong client} & \multicolumn{2}{c}{Moderate client} & \multicolumn{2}{c}{Weak client} & Avg. Capacity & Validation accuracy \\ \midrule
case 1 & 128 & $(100\%)$ & 0 & $(0\%)$ & 0 & $(0\%)$ & 1.00 & $82.15 \pm 0.3\%$ \\ \midrule
case 2 & 64 & $(50\%)$ & 64 & $(50\%)$ & 0 & $(0\%)$ & 0.74 & $82.21 \pm 0.3\%$ \\ 
case 3 & 32 & $(25\%)$ & 96 & $(75\%)$ & 0 & $(0\%)$ & 0.61 & $82.09 \pm 0.5\%$ \\
case 4 & 16 & $(12.5\%)$ & 112 & $(87.5\%)$ & 0 & $(0\%)$ & 0.55 & $82.55 \pm 0.3\%$\\ \midrule
case 5 & 64 & $(50\%)$ & 0 & $(0\%)$ & 64 & $(50\%)$ & 0.68 & $80.38 \pm 0.2\%$ \\ 
case 6 & 32 & $(25\%)$ & 0 & $(0\%)$ & 96 & $(75\%)$ & 0.51 & $77.09 \pm 0.8\%$ \\
case 7 & 16 & $(12.5\%)$ & 0 & $(0\%)$ & 112 & $(87.5\%)$ & 0.43 & $74.89 \pm 0.4\%$\\ \midrule
case 8 & 32 & $(25\%)$ & 32 & $(25\%)$ & 64 & $(50\%)$ & 0.55 & $79.09 \pm 0.4\%$ \\ 
case 9 & 16 & $(12.5\%)$ & 32 & $(25\%)$ & 80 & $(62.5\%)$ & 0.46 & $79.41 \pm 0.3\%$ \\
case 10 & 16 & $(12.5\%)$ & 16 & $(12.5\%)$ & 96 & $(75\%)$ & 0.45 & $77.32 \pm 0.4\%$ \\ \bottomrule
\end{tabular}
\end{table*}

\begin{table*}[t]
\footnotesize
\centering
\caption{
    The non-IID CIFAR-10 classification performance under various heterogeneous FL settings.
    Width Reduction corresponds to HeteroFL \cite{diao2020heterofl} and FjORD \cite{horvath2021fjord}.
}
\label{tab:comparison}
\begin{tabular}{lrlrlrlccc} \toprule
& \multicolumn{2}{c}{Strong client} & \multicolumn{2}{c}{Moderate client} & \multicolumn{2}{c}{Weak client} & Avg. Capacity & Width Reduction & \texttt{EmbracingFL} \\ \midrule
case 1 & 128 & $(100\%)$ & 0 & $(0\%)$ & 0 & $(0\%)$ & 1.00 & \multicolumn{2}{c}{$80.45 \pm 0.2\%$} \\ \midrule
case 2 & 64 & $(50\%)$ & 64 & $(50\%)$ & 0 & $(0\%)$ & 0.71 & $76.77 \pm 1.3\%$ & $\mathbf{80.20}\pm 0.2\%$ \\
case 3 & 32 & $(25\%)$ & 96 & $(75\%)$ & 0 & $(0\%)$ & 0.57 & $67.92\pm 2.1\%$ & $\mathbf{79.78}\pm 0.4\%$ \\
case 4 & 16 & $(12.5\%)$ & 112 & $(87.5\%)$ & 0 & $(0\%)$ & 0.49 & $59.03\pm 0.8\%$ 
 & $\mathbf{79.72}\pm 0.6\%$ \\ \midrule
case 5 & 64 & $(50\%)$ & 0 & $(0\%)$ & 64 & $(50\%)$ & 0.58 & $72.97 \pm 2.5\%$ & $\mathbf{80.08}\pm 0.4\%$ \\
case 6 & 32 & $(25\%)$ & 0 & $(0\%)$ & 96 & $(75\%)$ & 0.37 & $69.70 \pm 2.1 \%$ & $\mathbf{78.91}\pm 0.5\%$ \\
case 7 & 16 & $(12.5\%)$ & 0 & $(0\%)$ & 112 & $(87.5\%)$ & 0.27 & $54.53 \pm 2.9\%$ & $\mathbf{76.98}\pm 1.6\%$ \\ \midrule
case 8 & 32 & $(25\%)$ & 32 & $(25\%)$ & 64 & $(50\%)$ & 0.44 & $67.93\pm 1.5\%$ & $\mathbf{80.33}\pm 0.6\%$ \\
case 9 & 16 & $(12.5\%)$ & 32 & $(25\%)$ & 80 & $(62.5\%)$ & 0.33 & $59.59\pm 0.8\%$ & $\mathbf{79.63}\pm 0.1\%$ \\
case 10 & 16 & $(12.5\%)$ & 16 & $(12.5\%)$ & 96 & $(75\%)$ & 0.30 & $58.17\pm 2.2\%$ & $\mathbf{77.63}\pm 0.2\%$ \\ \bottomrule
\end{tabular}
\end{table*}

\begin{table*}[t]
\footnotesize
\centering
\caption{
    The number of comm. rounds to achieve the target CIFAR-10 accuracy.
    This comparison demonstrates how effective \textit{EmbracingFL} is in terms of computational and communication costs.
}
\begin{tabular}{cclclclccc} \toprule
&  \multicolumn{2}{c}{Strong client} & \multicolumn{2}{c}{Moderate client} & \multicolumn{2}{c}{Weak client} & Target Acc. & Width Reduction & \texttt{EmbracingFL} \\ \midrule
case 1 & 128 & ($100\%$) & 0 & ($0\%$) & 0 & ($0\%$) & $80\%$ & \multicolumn{2}{c}{890} \\ \midrule
case 2 & 64 & ($50\%$) & 64 & ($50\%$) & 0 & ($0\%$) & $76\%$ & 867 & \textbf{817} \\
case 3 & 32 & ($25\%$) & 96 & ($75\%$) & 0 & ($0\%$) & $67\%$ & 982 & \textbf{801} \\
case 4 & 16 & ($12.5\%$) & 112 & ($87.5\%$) & 0 & ($0\%$) & $59\%$ & 836 & \textbf{483} \\ \midrule
case 5 & 64 & ($50\%$) & 0 & ($0\%$) & 64 & ($50\%$) & $72\%$ & 813 & \textbf{811} \\
case 6 & 32 & ($25\%$) & 0 & ($0\%$) & 96 & ($75\%$) & $69\%$ & 806 & \textbf{629} \\
case 7 & 16 & ($12.5\%$) & 0 & ($0\%$) & 112 & ($87.5\%$) & $54\%$ & 939 & \textbf{572} \\ \midrule
\end{tabular}
\label{tab:rounds}
\end{table*}

\begin{table*}[ht!]
\scriptsize
\centering
\caption{
    Timing breakdown comparison (ResNet20). The timings are measured on a real edge device, OnePlus 9 Pro.
    Three model sizes are considered as shown in Table \ref{tab:size}.
    We measure the time for processing a single batch of size 32.
    The timings are averaged across 10 iterations.
}
\begin{tabular}{cc|rrr|rrr} \toprule
\multicolumn{2}{c|}{} & \multicolumn{3}{c|}{\texttt{EmbracingFL}} & \multicolumn{3}{c}{Width Reduction} \\ \midrule
Workload & Model Size & Computation & I/O & End-to-End & Computation & I/O & End-to-End \\ \midrule
 \multirow{3}{*}{Feed-forward} & Strong & \multirow{3}{*}{2095.4 ms} & - & 2095.4 ms & 2010.7 ms & - & 2010.7 ms \\ 
& Moderate & & 678.4 ms & 2773.8 ms & 1431.0 ms & - & 1431.0 ms \\
& Weak & & 1316.8 ms & 3412.2 ms & 936.7 ms & - & 936.7 ms \\ \midrule
\multirow{3}{*}{Backpropagation} & Strong & 419,643.8 ms & \multirow{3}{*}{-} & 419,643.8 ms & 421,047.5 ms & \multirow{3}{*}{-} & 421,047.5 ms \\
& Moderate & 197,265.0 ms & & 197,265.0 ms & 317,669.7 ms & & 317,669.7 ms \\
& Weak & 85,448.3 ms & & 85,448.3 ms & 187,580.4 ms & & 187,580.4 ms \\ \bottomrule
\end{tabular}
\label{tab:timing}
\end{table*}

Figure \ref{fig:cifar10} shows CIFAR-10 learning curve comparisons and Table \ref{tab:scaling} shows the corresponding accuracy comparisons.
Because each client has its own local training data, as having more \textit{weak} clients, the global model is expected to achieve higher accuracy.
While both approaches obtain a higher accuracy as more \textit{weak} clients join the training, there is a huge accuracy gap between \texttt{EmbracingFL} and the width reduction method.
This result demonstrates that \texttt{EmbracingFL} more effectively enables the \textit{weak} client participation.

\subsection {Classification Performance Under Various Heterogeneous FL Environments}
To evaluate the performance of \texttt{EmbracingFL} under realistic FL environments, we measure the validation accuracy under a variety of heterogeneous client capacity settings.
Table \ref{tab:cifar10_short}, \ref{tab:femnist_short}, and \ref{tab:imdb_short} show the classification performance of CIFAR-10, FEMNIST, and IMDB, respectively.
We consider ten different client capacity settings.
The \textit{case 1} is the baseline in which all $128$ clients are all \textit{strong} clients.
The \textit{Avg. Capacity} column is calculated by $C = C_s R_s + C_m R_m + C_w R_w$, where $C_s$, $C_m$, and $C_w$ are the capacity of each client type shown in Table \ref{tab:size} and $R_s$, $R_m$, and $R_w$ are the ratios of each client type to the total 128 clients.
In these experiments, we activate $25\%$ clients randomly sampled from 128 clients at every communication round.

Overall, \texttt{EmbracingFL} effectively utilizes the \textit{moderate} and \textit{weak} clients and achieves similar accuracy to the \textit{strong} client-only accuracy.
When some clients are \texttt{moderate}, we do not see any significant accuracy drop in all three benchmarks ($< 0.8\%$).
Under extreme settings (case 4, 7, 9, and 10) where the ratio of \textit{strong} clients is only $12.5\%$, \texttt{EmbracingFL} still achieves good accuracy comparable to case 1.
These results show that \texttt{EmbracingFL} maximizes the weak clients' contribution to the global model training and minimizes the accuracy drop.

\subsection {Comparative Study}

Table \ref{tab:comparison} shows CIFAR-10 classification performance comparison between \texttt{EmbracingFL} and the SOTA heterogeneous FL methods.
See Appendix \ref{sec:additional} for the detailed width reduction settings.
\texttt{EmbracingFL} achieves higher accuracy than the width reduction method under all the nine heterogeneous FL settings.
This comparison demonstrates that \texttt{EmbracingFL} enables the \textit{weak} client participation more effectively than the SOTA width reduction method.

Table \ref{tab:rounds} shows the number of communication rounds to achieve the target accuracy in CIFAR-10 experiments.
While Table \ref{tab:comparison} shows the best accuracy achieved within a certain fixed number of rounds, this comparison directly shows how fast each method can reach a certain target accuracy.
As expected, \texttt{EmbracingFL} achieves the target accuracy in much fewer communication rounds than the width reduction methods.
This result clearly shows the benefits of \texttt{EmbracingFL} with respect to the system efficiency.

\begin{table}[t]
\footnotesize
\centering
\caption{
    The CIFAR-10 (ResNet20) accuracy achieved with different batch normalization settings.
    The experiments are performed with 16 \textit{strong} clients and 112 \textit{weak} clients.
}
\label{tab:bn}
\begin{tabular}{ccc} \toprule
FL framework & Batch normalization & Accuracy \\ \midrule
Width Reduction & Static BN & $64.01 \pm 2.9\%$\\ 
Width Reduction & Global BN & $19.21\pm 1.5\%$\\ 
\texttt{EmbracingFL} & Static BN & $74.49\pm 1.1\%$ \\ 
\texttt{EmbracingFL} & Global BN & $\mathbf{77.34}\pm 1.6\%$\\ \bottomrule
\end{tabular}
\end{table}

\subsection {Performance Analysis On Edge Device} \label{sec:edge}
To compare the system efficiency, we analyze the timing breakdowns of the partial model training methods, measured on a real edge device.
We implemented ResNet20 training code using MNN \cite{jiang2020mnn} software library and measured timings on OnePlus 9 Pro.
Table \ref{tab:timing} shows the timings for a single mini-batch of size 32.
Although Alg. \ref{alg:EmbracingFL} runs the multi-step forward pass using the entire local dataset at once, we measure the I/O time for a single mini-batch in this experiment to show the effective extra cost per iteration.
The reported timings are averaged across at least 10 measures.

First, while \texttt{EmbracingFL} has the same computation time of forward pass regardless of the model size, the width reduction method has a proportionally reduced computation time.
In addition, \texttt{EmbracingFL} even has an extra I/O cost that is caused by the multi-step forward pass shown in \ref{alg:MSFP}.
Since the \textit{weak} clients receive the input-side layers from the server twice, the I/O time is almost double the I/O time of \textit{moderate} clients.
Consequently, \texttt{EmbracingFL} has a relatively longer forward pass time than the width reduction method at the \textit{moderate} and \textit{weak} clients.

In contrast to the forward pass, \texttt{EmbracingFL} shows a remarkably shorter backward pass time as compared to the width reduction method.
This difference stems from the different number of activations and weight parameters handled by the two heterogeneous FL strategies.
Specifically, the width reduction method tends to more drastically reduce the number of parameters than our proposed method while having more output activations at each layer.
While it depends on the model architecture, the number of activations is usually an order of magnitude larger than that of the model parameters in modern artificial neural networks.
Therefore, we can generally expect \texttt{EmbracingFL} to have a shorter backward pass time than the width reduction method.
Considering that the backward pass time is dominant over the forward pass time, we conclude that \texttt{EmbracingFL} well reduces the workload of the \textit{moderate} and \textit{weak} clients, allowing practical scaling on heterogeneous systems.

While the total transferred data size is not affected, the multi-step forward pass slightly increases the communication cost due to the extra latency cost (multiple model transfers).
The latency cost on edge devices is usually measured as 10 $\sim$ 20 ms \cite{gadban2021analyzing,pelle2022cost}, and cloud services like Amazon S3 have a latency cost of 100 $\sim$ 200 ms \cite{guide2008amazon}.
Therefore, compared to the computational cost shown in Table \ref{tab:timing}, the extra latency cost is negligible.
Nevertheless, we consider understanding the impact of having less frequent multi-step forward passes as interesting future work.

\subsection {Global vs. Static Batch Normalization}
We find that the width reduction methods significantly lose accuracy when the Batch Normalization (BN) statistics are globally synchronized.
HeteroFL employs \lq{}static BN\rq{} that does not track the global moving statistics of the data.
Table \ref{tab:bn} shows the CIFAR-10 accuracy comparison across different combinations of heterogeneous FL frameworks and BN settings.
The width reduction dramatically loses accuracy when the BN statistics are synchronized globally (global BN).
Because the \textit{weak} clients and the \textit{strong} clients view the layers with a different width, the directly averaged BN statistics do not well represent the global dataset's characteristics.
When the static BN is applied to \texttt{EmbracingFL}, it rather harms the accuracy.
Since \texttt{EmbracingFL} makes all clients view the same model architecture, the averaged statistics effectively represent the global dataset.
Therefore, we can conclude that \texttt{EmbracingFL} is more resilient to inaccurate BN statistics.

\section {Conclusion} \label{sec:conclusion}
Enabling weak client participation is essential in realistic FL environments.
We proposed \texttt{EmbracingFL}, a general FL framework that tackles the system heterogeneity issue through a layer-wise partial training strategy.
Our analysis provides a convergence guarantee of the proposed algorithm, and the extensive empirical study proves its efficacy in realistic heterogeneous FL environments.
The proposed layer-wise partial training strategy is readily applicable to any existing FL applications.
We believe our work sheds light on a novel way of implementing practical large-scale FL applications that exploit all available clients regardless of their resource capacities.
Reducing the frequency of biased forward pass and the overall latency cost is an important future work.


\section {Acknowledgment}
This material is based upon work supported by ARO award W911NF1810400 and CCF-1763673, ONR Award No. N00014-16-1-2189.
This work was partly supported by Institute of Information \& communications Technology Planning \& Evaluation (IITP) grant funded by the Korea government(MSIT) (No.RS-2022-00155915, Artificial Intelligence Convergence Innovation Human Resources Development (Inha University)).
This work was supported by the National Research Foundation of Korea(NRF) grant funded by the Korea government(MSIT) (No. RS-2023-00279003).
This work was supported by INHA UNIVERSITY Research Grant.

\normalsize
\bibliography{mybib}

\onecolumn
\subsection {Detailed Experimental Settings} \label{appendix:exp}

\textbf{Software Settings} --
We use TensorFlow 2.10.4 to implement FL software framework which run on a GPU cluster.
The model aggregation is implemented using MPI.
We commonly use the following hyper-parameter settings for all three benchmarks.
The mini-batch SGD with momentum (0.9) is used as a local optimizer.
The local mini-batch size is 32.
The weight decay factor is $0.0001$.
The number of local steps is fixed at 10.

\textbf{System Settings} --
We run federated learning simulations on a GPU-based cluster.
Each process runs on a single GPU and trains multiple local models sequentially.
Then, after all the active local models are trained, the whole models are averaged using inter-process communications.
We used 8 compute nodes in total, each of which has 2 NVIDIA V100 GPUs.
Therefore, we do not report the end-to-end execution time.

\subsection {Model Architecture and Learning Rate Schedule} \label{sec:model}
For CIFAR-10 classification, we used the standard ResNet20 model proposed in \cite{he2016deep}.
The learning rate is initially set to $0.4$ and decayed by a factor of $10$ after 800 and 900 communication rounds twice.
We find that this late learning rate decays yield higher accuracy than the typical settings (twice decays after $50\%$ and $75\%$ of iteration budget).

For FEMNIST classification, we use a CNN model presented in \cite{caldas2018leaf}.
The network consists of two convolution layers and two dense layers.
The convolution filter size is $5 \times 5$ and the number of channels is 32 and 64 for the two layers.
The first dense layer size is 2048.
Each convolution layer is followed by a max-pooling layer with a filter size of $2 \times 2$.
The learning rate is set to $0.04$ and we do not decay it for the whole training.

For IMDB review sentiment analysis, we use a bidirectional LSTM.
The first layer is an embedding layer followed by a dropout (0.3).
The input dimension is 10,000, the output dimension is 256, and the maximum length of the input sentence is 256.
Then, the bidirectional LSTM layer is attached to the embedding layer.
The LSTM size is 256 and the output of each recurrence is followed by dropout (0.3).
Finally, the LSTM module is followed by a dense layer.
The learning rate is set to $0.1$ and decayed by a factor of $10$ after $200$ and $300$ communication rounds.

\subsection {SVCCA setting} \label{sec:svcca}
SVCCA \cite{raghu2017svcca} quantifies the data representation similarity between two matrices.
We use this useful technique to quantify the data representation similarity between two different local models.
We first collect the output activation matrices at the target layer using the validation dataset.
Given the two matrices, then we perform Singular Vector Decomposition (SVD) on them and obtain the top $4$ singular vectors from each.
Note that we chose to use the 4 singular vectors only because using more vectors does not strongly affect the result while significantly increasing the analysis time.
Finally, we run CCA using these two sets of singular vectors.
All the reported SVCCA is the average CCA coefficients.
We used the open-source SVCCA code \footnote{https://github.com/google/svcca}.
Because SVD requires extremely large memory space for large matrices, we processed several small validation batches and averaged the SVCCA values.

\subsection {Width Reduction Settings} \label{sec:additional}
We mainly compare \texttt{EmbracingFL} with the width reduction method using CIFAR-10 (ResNet20) benchmark.
Table \ref{tab:size2} shows the \textit{Capacity} of the local models for the width reduction method.
For \textit{moderate} clients, we drop $55\%$ of channels at every layer.
For \textit{weak} clients, we drop $80\%$ of channels at every layer.
For $p$, we only count the weight parameters.
Under this setting, the width reduction method yields a similar memory footprint as that of \texttt{EmbracingFL} shown in Table \ref{tab:size}.
Note that in these experiments, all 128 clients always join the training.

\begin{table}[ht!]
\scriptsize
\centering
\caption{
    Width Reduction: The model size of the three different types of clients.
}
\begin{tabular}{llll} \toprule
Removed channels of Resnet20 (CIFAR-10) & Number of parameters ($p$) & Number of activations ($a$) & Capacity \\ \midrule
(Strong) - & 272,762 & 6,947,136 & 1.00 \\ 
(Moderate) $55\%$ of the consecutive channels & 52,874 & 3,039,552
 & 0.43 \\
(Weak) $80\%$ of the consecutive channels & 10,006 & 1,302,848 & 0.18 \\ \bottomrule
\end{tabular}
\label{tab:size2}
\end{table}

\subsection {Theoretical Analysis Preliminary}
For simplicity, we focus on the case where the model is partitioned to two subsets.
Our analysis can easily extend to the general case where the model is partitioned to more than 2 subsets.

\textbf{Notations} -- We first define a few notations for our analysis.
\begin{itemize}
    \item $m$: the total number of clients
    \item $s$: the number of \textit{strong} clients
    \item $T$: the number of total training communication rounds
    \item $\tau$: the model averaging interval (number of local steps)
    \item $\mathbf{x}$: the whole model parameters
    \item $\mathbf{y}$: the layers that are aggregated from all the clients
    \item $\mathbf{z}$: the layers that are aggregated from strong clients only
    \item $L_{y}$: the Lipschitz constant of $\mathbf{y}$
    \item $L_{z}$: the Lipschitz constant of $\mathbf{z}$
    \item $L_{max}$: the maximum Lipschitz constant across the two model partitions
\end{itemize}
$\mathbf{x}_{t,j}^{i} \in \mathbb{R}^d$ denotes the client $i$'s local model at $j^{th}$ local step in $t^{th}$ communication round.
The stochastic gradient computed from a single training data point $\xi_i$ is denoted by $\nabla f(\mathbf{x}_{t,j}^{i}, \xi_i)$.
For convenience, we use $\nabla f(\mathbf{x}_{t,j}^{i}$) instead.
The local full-batch gradient is denoted by $\nabla F_i(\cdot)$.
We use $\|\cdot\|$ to denote $\ell_2$ norm.

\textbf{Model Partitioning} -- 
We partition the model to two subsets and analyze their convergence properties separately.
First, the output-side layers that are trained by all the clients are denoted by $\mathbf{z}$.
All the remaining layers trained only by \textit{strong} clients are denoted by $\mathbf{y}$.
Thus, the total model parameters are defined as follows.
\begin{align}
    \mathbf{x} = (\mathbf{y}, \mathbf{z}). \nonumber
\end{align}
Likewise, the gradient of the model partitions are defined as follows.
\begin{align}
    \nabla f(\mathbf{x}) = (\nabla_y f(\mathbf{x}), \nabla_z f(\mathbf{x})), \nonumber
\end{align}
where $\nabla_y f(\mathbf{x})$ and $\nabla_z f(\mathbf{x})$ denote the stochastic gradients correspond to $\mathbf{y}$ and $\mathbf{z}$, respectively.
Finally, the local full-batch gradient and the global gradient for the model partitions are defined as follows.
\begin{align}
    \nabla F_i(\mathbf{x}) &= (\nabla_y F_i(\mathbf{x}), \nabla_z F_i(\mathbf{x})) \nonumber \\
        \nabla F(\mathbf{x}) &= (\nabla_y F(\mathbf{x}), \nabla_z F(\mathbf{x})) \nonumber
\end{align}

\textbf{Assumptions} --
Our analysis assumes the following.

1. (Smoothness). Each local objective function is $L_k$-smooth, $\forall k \in \{y, z\}$, that is, $\| \nabla_k F_i (\mathbf{x}) - \nabla_k F_i (\mathbf{x}') \| \leq L_k \| \mathbf{x} - \mathbf{x}' \|, \forall i \in \{ 1, \cdots, m \}$ and $\forall k \in \{y, z\}$.

2. (Unbiased Gradient). The stochastic gradient at each client is an unbiased estimator of the local full-batch gradient: $\mathop{\mathbb{E}}_{\xi_i} \left[ \nabla_k f(\mathbf{x}_{t,j}^{i}) \right] = \nabla_k F_i(\mathbf{x}_{t,j}^{i}), \forall i \in \{ 1, \cdots, m \}$ and $\forall k \in \{y, z\}$.

3. (Bounded Variance). The stochastic gradient at each client has bounded variance: $\mathop{\mathbb{E}}_{\xi_i} \left[ \| \nabla_k f(\mathbf{x}_{t,j}^{i}) - \nabla_k F_i (\mathbf{x}_{t,j}^{i}) \|^2 \right] \leq \sigma_k^2, \forall i \in \{ 1, \cdots, m \}$ and $\forall k \in \{y, z\}$.
Likewise, the full batch gradient has bounded variance: $\mathop{\mathbb{E}} \left[ \| \nabla_k F_i(\mathbf{x}_{t,j}^{i}) - \nabla_k F (\mathbf{x}_{t,j}^{i}) \|^2 \right] \leq \bar{\sigma}_k^2, \forall i \in \{ 1, \cdots, m \}$ and $\forall k \in \{y, z\}$.

\textbf{Averaging Weight} --
The proposed algorithm can be considered as having different averaging weights for the two model partitions as follows.
\begin{align}
    x_{t+1} = \mathbf{x}_t - \eta \sum_{i=1}^{m} \sum_{j=0}^{\tau - 1} p^i \nabla f(\mathbf{x}_{t,j}^{i}),
\end{align}
where $p^i$ is a client-specific averaging weight.
We define $p_y^i$ and $p_z^i$, the averaging weights for the two model partitions as follows.
\begin{align}
p_y^i &=
\begin{cases}
    \frac{1}{s}, \hspace{1cm} \textrm{ if client } i \textrm{ is a strong client} \\
    0, \hspace{1cm} \textrm{ if client } i \textrm{ is a weak client}
\end{cases} \label{piy}\\
p_z^i &= \frac{1}{m} \label{piz}
\end{align}

\subsection {Proof of Main Theorem and Lemmas} \label{app:proof}
To make this appendix self-contained, we define a few notations here again.
We consider federated optimization problems as follows.
\begin{align}
    \underset{\mathbf{x} \in \mathbb{R}^d}{\min}\left[F(\mathbf{x}) := \frac{1}{m} \sum_{i=1}^{m} F_i(\mathbf{x}) \right],
\end{align}
where $m$ is the number of local models and $F_i(\mathbf{x}) = \mathop{\mathbb{E}}_{\xi_i \sim D_i}[ F_i(\mathbf{x}, \xi_i)]$
is the local objective function associated with local data distribution $D_i$.
We define $\nabla_k F_i(\mathbf{x}, \xi_i)$ as the stochastic gradient with respect to the model partition $k$ and client $i$'s local data $\xi_i$. We omit $\xi_i$ for simplicity.
Our analysis result is shown in Theorem 1 as follows.

\textbf{Theorem 1.} \textit{
Suppose all $m$ local models are initialized to the same point $\mathbf{x}_0$. Under Assumption $1 \sim 3$, if Algorithm \ref{alg:EmbracingFL} runs for $T$ communication rounds and the learning rate satisfies $\eta \leq \min{ \left\{ \frac{1}{\tau L_{max}}, \frac{1}{4 L_{max} \sqrt{\tau (\tau - 1)}} \right\} }$, the average-squared gradient norm of $\mathbf{x}_t$ is bounded as follows.}
\begin{align}
    \frac{1}{T} \sum_{t=0}^{T-1} \mathop{\mathbb{E}}\left[ \left\| \nabla F(\mathbf{x}_t) \right\|^2 \right] & \leq \frac{14}{3 T \eta \tau} \left( F(\mathbf{x}_0) - F(\mathbf{x}_{*}) \right) \nonumber \\
    & \hspace{1cm} + \left( \frac{7 L_y \eta}{3s} + \frac{16 L_y^2 \eta^2 (\tau - 1)}{3} \right) \sigma_y^2  + \left( \frac{14}{3s} + \frac{64 L_y^2 \eta^2 \tau (\tau - 1)}{3} \right) \bar{\sigma}_y^2 \nonumber \\
    & \hspace{1cm} + \left( \frac{7L_z \eta}{3m} + \frac{8 L_z^2 \eta^2 (\tau - 1)}{3} \right) \sigma_z^2 \nonumber + \left( \frac{32 L_z^2 \eta^2 \tau (\tau - 1)}{3} \right) \bar{\sigma}_z^2 \nonumber
\end{align}
\begin{proof}
Based on Lemma \ref{lemma:framework} and \ref{lemma:discrepancy}, we have
\begin{align}
    \sum_{t=0}^{T-1} \mathop{\mathbb{E}}\left[ \left\| \nabla F(\mathbf{x}_t) \right\|^2 \right] & \leq \frac{2}{\eta \tau} \left( F(\mathbf{x}_0) - F(\mathbf{x}_{T-1}) \right) \nonumber \\
    & \hspace{1cm} + \frac{L_y\eta T}{s} \sigma_y^2 + \frac{2T}{s} \bar{\sigma}_y^2 + \frac{L_z \eta T}{m} \sigma_z^2 \nonumber \\
    &\hspace{1cm} + \sum_{t=0}^{T-1} \left( \frac{2 \eta^2 (\tau - 1) L_y^2}{1 - A_y} \sigma_y^2 + \frac{4A_y}{1 - A_y} \bar{\sigma}_y^2 + \frac{4A_y}{1 - A_y} \mathop{\mathbb{E}} \left[ \left\| \nabla_y F(\mathbf{x}_t) \right\|^2 \right] \right), \nonumber \\
    & \hspace{1cm} + \sum_{t=0}^{T-1} \left( \frac{\eta^2 (\tau - 1) L_z^2}{1 - A_z} \sigma_z^2 + \frac{2A_z}{1 - A_z} \bar{\sigma}_z^2 + \frac{2A_z}{1 - A_z} \mathop{\mathbb{E}} \left[ \left\| \nabla_z F(\mathbf{x}_t) \right\|^2 \right] \right), \nonumber
\end{align}
where $A_z \coloneqq 2\eta^2 L_z^2 \tau (\tau - 1) < 1$ and $A_y \coloneqq 2\eta^2 L_y^2 \tau (\tau - 1) < 1$.
Note that Lemma \ref{lemma:discrepancy} can be re-used to bound the model discrepancy of $\mathbf{z}$ by replacing all $\frac{1}{s}$ and $\sum_{i=1}^{s}$ with $\frac{1}{m}$ and $\sum_{i=1}^{m}$. 

After a minor rearrangement, we have
\begin{align}
    \sum_{t=0}^{T-1} \mathop{\mathbb{E}}\left[ \left\| \nabla F(\mathbf{x}_t) \right\|^2 \right] & \leq \frac{2}{\eta \tau} \left( F(\mathbf{x}_0) - F(\mathbf{x}_{T-1}) \right) \nonumber \\
    & \hspace{1cm} + \frac{L_y\eta T}{s} \sigma_y^2 + \frac{2T}{s} \bar{\sigma}_y^2 + \frac{L_z \eta T}{m} \sigma_z^2 \nonumber \\
    & \hspace{1cm} + \sum_{t=0}^{T-1} \left( \frac{2 \eta^2 (\tau - 1) L_y^2}{1 - A_y} \sigma_y^2 + \frac{4A_y}{1 - A_y} \bar{\sigma}_y^2 + \frac{\eta^2 (\tau - 1) L_z^2}{1 - A_z} \sigma_z^2 + \frac{2A_z}{1 - A_z} \bar{\sigma}_z^2 \right) \nonumber \\
    &\hspace{1cm} + \sum_{t=0}^{T-1} \left( \frac{4A_y}{1 - A_y} \mathop{\mathbb{E}} \left[ \left\| \nabla_y F(\mathbf{x}_t) \right\|^2 \right] + \frac{2A_z}{1 - A_z} \mathop{\mathbb{E}} \left[ \left\| \nabla_z F(\mathbf{x}_t) \right\|^2 \right] \right) \nonumber \\
    & \leq \frac{2}{\eta \tau} \left( F(\mathbf{x}_0) - F(\mathbf{x}_{T-1}) \right) \nonumber \\
    & \hspace{1cm} + \frac{L_y\eta T}{s} \sigma_y^2 + \frac{2T}{s} \bar{\sigma}_y^2 + \frac{L_z \eta T}{m} \sigma_z^2 \nonumber \\
    & \hspace{1cm} + \sum_{t=0}^{T-1} \left( \frac{2 \eta^2 (\tau - 1) L_y^2}{1 - A_y} \sigma_y^2 + \frac{4A_y}{1 - A_y} \bar{\sigma}_y^2 + \frac{\eta^2 (\tau - 1) L_z^2}{1 - A_z} \sigma_z^2 + \frac{2A_z}{1 - A_z} \bar{\sigma}_z^2 \right) \nonumber \\
    &\hspace{1cm} + \sum_{t=0}^{T-1} \left(\frac{4A_{max}}{1 - A_{max}} \mathop{\mathbb{E}} \left[ \left\| \nabla F(\mathbf{x}_t) \right\|^2 \right]\right), \label{amax}
\end{align}
where $A_{max} = \max\{A_y, A_z \}$.

If $A_{max} \leq \frac{1}{8}$, $\frac{4A_{max}}{1 - A_{max}} \leq \frac{4}{7}$.
Then, (\ref{amax}) can be simplified as follows.
\begin{align}
    \sum_{t=0}^{T-1} \mathop{\mathbb{E}}\left[ \left\| \nabla F(\mathbf{x}_t) \right\|^2 \right] & \leq \frac{2}{\eta \tau} \left( F(\mathbf{x}_0) - F(\mathbf{x}_{T-1}) \right) + \frac{L_y\eta T}{s} \sigma_y^2 + \frac{2T}{s} \bar{\sigma}_y^2 + \frac{L_z \eta T}{m} \sigma_z^2 \nonumber \\
    & \hspace{1cm} + \sum_{t=0}^{T-1} \left( \frac{2 \eta^2 (\tau - 1) L_y^2}{1 - A_y} \sigma_y^2 + \frac{4A_y}{1 - A_y} \bar{\sigma}_y^2 + \frac{\eta^2 (\tau - 1) L_z^2}{1 - A_z} \sigma_z^2 + \frac{2A_z}{1 - A_z} \bar{\sigma}_z^2 \right) \nonumber \\
    &\hspace{1cm} + \frac{4}{7} \sum_{t=0}^{T-1} \mathop{\mathbb{E}} \left[ \left\| \nabla F(\mathbf{x}_t) \right\|^2 \right] \nonumber
\end{align}
The same condition of $A_{max}$ also ensures $\frac{1}{1 - A_{max}} \leq \frac{8}{7}$.
Thus, we have
\begin{align}
    \sum_{t=0}^{T-1} \mathop{\mathbb{E}}\left[ \left\| \nabla F(\mathbf{x}_t) \right\|^2 \right] & \leq \frac{2}{\eta \tau} \left( F(\mathbf{x}_0) - F(\mathbf{x}_{T-1}) \right) \nonumber \\
    & \hspace{1cm} + \frac{L_y\eta T}{s} \sigma_y^2 + \frac{2T}{s} \bar{\sigma}_y^2 + \frac{L_z \eta T}{m} \sigma_z^2 \nonumber \\
    & \hspace{1cm} + \sum_{t=0}^{T-1} \left( \frac{16}{7} \eta^2 (\tau - 1) L_y^2 \sigma_y^2 + \frac{32}{7}A_y \bar{\sigma}_y^2 + \frac{8}{7}\eta^2 (\tau - 1) L_z^2 \sigma_z^2 + \frac{16}{7}A_z \bar{\sigma}_z^2 \right) \nonumber \\
    &\hspace{1cm} + \frac{4}{7} \sum_{t=0}^{T-1} \mathop{\mathbb{E}} \left[ \left\| \nabla F(\mathbf{x}_t) \right\|^2 \right]. \nonumber
\end{align}

After dividing both sides by $T$ and rearranging the terms, we finally have
\begin{align}
    \frac{1}{T} \sum_{t=0}^{T-1} \mathop{\mathbb{E}}\left[ \left\| \nabla F(\mathbf{x}_t) \right\|^2 \right] & \leq \frac{14}{3 T \eta \tau} \left( F(\mathbf{x}_0) - F(\mathbf{x}_{T-1}) \right) \nonumber \\
    & \hspace{1cm} + \frac{7 L_y\eta}{3s} \sigma_y^2  + \frac{14}{3s} \bar{\sigma}_y^2 + \frac{7L_z \eta}{3m} \sigma_z^2 \nonumber \\
    & \hspace{1cm} + \frac{16}{3} \eta^2 (\tau - 1) L_y^2 \sigma_y^2 + \frac{32}{3}A_y \bar{\sigma}_y^2 + \frac{8}{3}\eta^2 (\tau - 1) L_z^2 \sigma_z^2 + \frac{16}{3}A_z \bar{\sigma}_z^2 \nonumber \\
    & = \frac{14}{3 T \eta \tau} \left( F(\mathbf{x}_0) - F(\mathbf{x}_{T-1}) \right) \nonumber \\
    & \hspace{1cm} + \left( \frac{7 L_y \eta}{3s} + \frac{16 L_y^2 \eta^2 (\tau - 1)}{3} \right) \sigma_y^2  + \left( \frac{14}{3s} + \frac{64 L_y^2 \eta^2 \tau (\tau - 1)}{3} \right) \bar{\sigma}_y^2 \nonumber \\
    & \hspace{1cm} + \left( \frac{7L_z \eta}{3m} + \frac{8 L_z^2 \eta^2 (\tau - 1)}{3} \right) \sigma_z^2 \nonumber + \left( \frac{32 L_z^2 \eta^2 \tau (\tau - 1)}{3} \right) \bar{\sigma}_z^2 \nonumber \\
    & \leq \frac{14}{3 T \eta \tau} \left( F(\mathbf{x}_0) - F(\mathbf{x}_{*}) \right) \nonumber \\
    & \hspace{1cm} + \left( \frac{7 L_y \eta}{3s} + \frac{16 L_y^2 \eta^2 (\tau - 1)}{3} \right) \sigma_y^2  + \left( \frac{14}{3s} + \frac{64 L_y^2 \eta^2 \tau (\tau - 1)}{3} \right) \bar{\sigma}_y^2 \nonumber \\
    & \hspace{1cm} + \left( \frac{7L_z \eta}{3m} + \frac{8 L_z^2 \eta^2 (\tau - 1)}{3} \right) \sigma_z^2 \nonumber + \left( \frac{32 L_z^2 \eta^2 \tau (\tau - 1)}{3} \right) \bar{\sigma}_z^2 \nonumber
\end{align}
where $\mathbf{x}_{*}$ is a local minimum.
This completes the proof.
\end{proof}

\textbf{Learning Rate Constraints} -- Theorem \ref{theorem:main} has two learning rate constraints as follows.
\begin{align}
    L_{max}\eta \tau - 1 &\leq 0 \nonumber \\
    2 L_{max}^2 \eta^2 \tau (\tau - 1) &\leq \frac{1}{8} \nonumber
\end{align}
After a minor rearrangement, we can have a single learning rate constraint as follows.
\begin{align}
    \eta \leq \min{ \left\{ \frac{1}{\tau L_{max}}, \frac{1}{4 L_{max} \sqrt{\tau (\tau - 1)}} \right\} }
\end{align}

\begin{lemma}
\label{lemma:framework}
(framework) Under Assumption $1 \sim 3$, if the learning rate satisfies $\eta \leq 1 / (\tau L_{max})$, Algorithm 1 ensures
\begin{align}
    \sum_{t=0}^{T-1} \mathop{\mathbb{E}}\left[ \left\| \nabla F(\mathbf{x}_t) \right\|^2 \right] & \leq \frac{2}{\eta \tau} \left( F(\mathbf{x}_0) - F(\mathbf{x}_{T-1}) \right) \nonumber \\
    & \hspace{1cm} + \frac{L_y\eta T}{s} \sigma_y^2 + \frac{2T}{s} \bar{\sigma}_y^2 + \frac{L_z \eta T}{m} \sigma_z^2 \nonumber \\
    &\hspace{1cm} + \frac{2 L_y^2}{s \tau} \sum_{t=0}^{T-1} \sum_{i=1}^{s} \sum_{j=0}^{\tau-1} \mathop{\mathbb{E}} \left[ \left\| \mathbf{y}_{t,j}^{i} - \mathbf{y}_t \right\|^2 \right] \nonumber \\
    & \hspace{1cm} + \frac{L_z^2}{m \tau} \sum_{t=0}^{T-1} \sum_{i=1}^{m} \sum_{j=0}^{\tau-1} \mathop{\mathbb{E}} \left[ \left\| \mathbf{z}_{t,j}^{i} - \mathbf{z}_t \right\|^2 \right] \nonumber
\end{align}
\end{lemma}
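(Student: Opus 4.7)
The plan is to derive a one-round descent inequality for $F$ by applying block-wise $L_k$-smoothness along the partition $\mathbf{x} = (\mathbf{y}, \mathbf{z})$, and then telescope over $t = 0, \ldots, T-1$. Concretely, introducing the intermediate point $(\mathbf{y}_{t+1}, \mathbf{z}_t)$ and applying $L_y$-smoothness in $\mathbf{y}$ and $L_z$-smoothness in $\mathbf{z}$ yields
\[
F(\mathbf{x}_{t+1}) - F(\mathbf{x}_t) \leq \sum_{k \in \{y,z\}} \left( \langle \nabla_k F(\mathbf{x}_t), \Delta_k \rangle + \frac{L_k}{2}\|\Delta_k\|^2 \right),
\]
where $\Delta_k = -\eta \sum_{j=0}^{\tau-1} \sum_{i=1}^m p_k^i \nabla_k f(\mathbf{x}_{t,j}^i)$ and the weights $p_k^i$ are given in (\ref{piy})--(\ref{piz}). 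Taking expectation and using Assumption 2 turns the cross term into an inner product involving only full-batch local gradients.

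First I would process each cross term. Using the identity $-\langle a, b \rangle = \frac{1}{2}\|a-b\|^2 - \frac{1}{2}\|a\|^2 - \frac{1}{2}\|b\|^2$ together with $L_k$-smoothness of each $F_i$ (which gives $\|\nabla_k F_i(\mathbf{x}_{t,j}^i) - \nabla_k F_i(\mathbf{x}_t)\|^2 \leq L_k^2 \|\mathbf{x}_{t,j}^i - \mathbf{x}_t\|^2$), the cross term splits into the desired $-\tfrac{\eta\tau}{2}\|\nabla_k F(\mathbf{x}_t)\|^2$ contribution, a local-global discrepancy $\tfrac{L_k^2}{2\tau}\sum_{i,j} \|\mathbf{x}_{t,j}^i - \mathbf{x}_t\|^2$ which after block restriction becomes the two discrepancy sums appearing in the statement, and a residual bias that for the $\mathbf{y}$-block evaluates to $\bar{\sigma}_y^2/s$ because only $s$ strong clients are averaged with weight $p_y^i = 1/s$, while for $\mathbf{z}$ the averaging is over all $m$ clients and the analogous bias vanishes.

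Second I would handle the quadratic terms $\tfrac{L_k}{2}\|\Delta_k\|^2$. Writing each $\Delta_k$ as $-\eta$ times a sum over $j$ of per-step aggregates $\bar{g}_{t,j}^{(k)}$ and decomposing $\bar{g}_{t,j}^{(k)}$ into its conditional mean and zero-mean noise, the noise contributions are independent across local steps, so the Cauchy--Schwarz factor of $\tau$ only hits the mean part. Combined with Assumption 3 and the identity $\sum_i (p_k^i)^2 = 1/s$ or $1/m$, this yields exactly the stochastic-noise terms $\tfrac{L_y \eta T}{s}\sigma_y^2$ and $\tfrac{L_z \eta T}{m}\sigma_z^2$ after summing over $t$. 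The residual mean-squared component of $\|\Delta_k\|^2$ contributes extra $\|\nabla_k F(\mathbf{x}_t)\|^2$ terms that are absorbed using the learning-rate constraint $\eta \leq 1/(\tau L_{\max})$, keeping the positive coefficient strictly below the negative contribution from the cross term so that the net coefficient on $\|\nabla F(\mathbf{x}_t)\|^2$ is at least $\eta \tau / 2$.

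Finally I would telescope the per-round inequality over $t = 0, \ldots, T-1$, divide by $\eta\tau/2$, and rearrange to obtain the stated bound. The main obstacle I anticipate is the asymmetric bookkeeping between the two blocks: the $\mathbf{y}$-block is averaged over only $s$ clients and therefore inherits an irreducible heterogeneity-variance term $\bar{\sigma}_y^2/s$, whereas the $\mathbf{z}$-block is globally averaged and has no corresponding $\bar{\sigma}_z^2$ contribution in the statement. Ensuring the prefactors come out as $1/s$ versus $1/m$, that the $\tau$ factors arising from Cauchy--Schwarz and the per-step variance inequalities line up correctly, and that the discrepancy sums at the end are restricted to exactly the $s$ strong clients for $\mathbf{y}$ and all $m$ clients for $\mathbf{z}$ is the delicate accounting step. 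Once this is tracked carefully, the argument reduces to two parallel runs of the standard local-SGD descent analysis, one per block, glued together by block-wise smoothness.
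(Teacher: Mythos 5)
Your proposal follows essentially the same route as the paper's proof: block-wise smoothness of $F$ along the partition $(\mathbf{y},\mathbf{z})$, the polarization identity on the cross terms (yielding the $-\tfrac{\eta\tau}{2}\|\nabla_k F(\mathbf{x}_t)\|^2$ descent, the local-vs-global discrepancy sums via $L_k$-smoothness of each $F_i$, and the $\bar{\sigma}_y^2\tau/s$ bias from averaging the $\mathbf{y}$-block over only the $s$ strong clients), a mean/noise decomposition of the quadratic terms giving $\tfrac{L_y\eta^2\tau}{2s}\sigma_y^2$ and $\tfrac{L_z\eta^2\tau}{2m}\sigma_z^2$, and then telescoping. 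The only imprecision is in your third step: the residual mean part of $\|\Delta_k\|^2$ is $\tfrac{1}{s^2}\mathbb{E}\bigl[\|\sum_{i}\sum_{j}\nabla_k F_i(\mathbf{x}_{t,j}^i)\|^2\bigr]$ (evaluated at the local iterates), not a multiple of $\|\nabla_k F(\mathbf{x}_t)\|^2$, and the learning-rate condition $\eta\le 1/(\tau L_{\max})$ is used to cancel it against the matching negative term of the same quantity produced by the polarization identity, leaving the full $-\tfrac{\eta\tau}{2}$ descent coefficient intact rather than merely dominating a positive $\|\nabla F(\mathbf{x}_t)\|^2$ contribution.
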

\begin{proof}
Based on Assumption 1, we have
\begin{align}
    \mathop{\mathbb{E}}\left[ F(\mathbf{x}_{t+1}) - F(\mathbf{x}_{t}) \right] & \leq \mathop{\mathbb{E}}\left[ \langle \nabla F(\mathbf{x}_{t}), \mathbf{x}_{t+1} - \mathbf{x}_{t} \rangle \right] + \frac{L}{2} \left\| \mathbf{x}_{t+1} - \mathbf{x}_{t} \right\|^2 \nonumber \\
    & = \mathop{\mathbb{E}}\left[ \langle \nabla_y F(\mathbf{x}_{t}), \mathbf{y}_{t+1} - \mathbf{y}_{t} \rangle \right] + \frac{L_y}{2} \left\| \mathbf{y}_{t+1} - \mathbf{y}_{t} \right\|^2 \label{decompose} \\
    & \quad\quad + \mathop{\mathbb{E}}\left[ \langle \nabla_z F(\mathbf{x}_{t}), \mathbf{z}_{t+1} - \mathbf{z}_{t} \rangle \right] + \frac{L_z}{2} \left\| \mathbf{z}_{t+1} - \mathbf{z}_{t} \right\|^2 \nonumber
\end{align}
For convenience, we define the gradients accumulated within a communication round as follows.
\begin{align}
    \Delta_t^i & \coloneqq \sum_{j=0}^{\tau-1} \nabla f(\mathbf{x}_{t,j}^i) \nonumber \\
    \Delta_t & \coloneqq \sum_{i=1}^{m} p^i \Delta_t^i \nonumber
\end{align}
Based on (\ref{piy}) and (\ref{piz}), the accumulated gradients can be re-written as follows.
\begin{align}
    \Delta_{t}^{i} &= \left( \Delta_{y,t}^i, \Delta_{z,t}^i \right) \coloneqq \left( \sum_{j=0}^{\tau-1} \nabla_y f(\mathbf{x}_{t,j}^i), \sum_{j=0}^{\tau-1} \nabla_z f(\mathbf{x}_{t,j}^i) \right) \nonumber \\
    \Delta_{t} &= \left( \Delta_{y,t}, \Delta_{z,t} \right) \coloneqq \left( \frac{1}{s} \sum_{i=1}^{s} \Delta_{y,t}^i, \frac{1}{m} \sum_{i=1}^{m} \Delta_{z,t}^i \right) \nonumber
\end{align}
Using the above definitions, (\ref{decompose}) can be simplified as follows.
\begin{align}
    \mathop{\mathbb{E}}\left[ F(\mathbf{x}_{t+1}) - F(\mathbf{x}_{t}) \right] & \leq -\eta \mathop{\mathbb{E}}\left[ \langle \nabla_y F(\mathbf{x}_t), \Delta_{y,t} \rangle \right] + \frac{L_y \eta^2}{2} \mathop{\mathbb{E}}\left[ \left\| \Delta_{y,t} \right\|^2 \right] \nonumber \\
    & \quad\quad -\eta \mathop{\mathbb{E}}\left[ \langle \nabla_z F(\mathbf{x}_t), \Delta_{z,t} \rangle \right] + \frac{L_z \eta^2}{2} \mathop{\mathbb{E}}\left[ \left\| \Delta_{z,t} \right\|^2 \right] \nonumber \\
    & = -\eta \mathop{\mathbb{E}}\left[ \langle \nabla_y F(\mathbf{x}_t), \Delta_{y,t} + \tau \nabla_y F(\mathbf{x}_t) - \tau \nabla_y F(\mathbf{x}_t) \rangle \right] \nonumber \\
    & \quad\quad -\eta \mathop{\mathbb{E}}\left[ \langle \nabla_z F(\mathbf{x}_t), \Delta_{z,t} + \tau \nabla_z F(\mathbf{x}_t) - \tau \nabla_z F(\mathbf{x}_t) \rangle \right]\nonumber \\
    & \quad\quad + \frac{L_y \eta^2}{2} \mathop{\mathbb{E}}\left[ \left\| \Delta_{y,t} \right\|^2 \right] + \frac{L_z \eta^2}{2} \mathop{\mathbb{E}}\left[ \left\| \Delta_{z,t} \right\|^2 \right] \nonumber \\
    & = - \eta \tau \mathop{\mathbb{E}}\left[ \left\| \nabla_y F(\mathbf{x}_{t}) \right\|^2 \right] + \eta \underset{T_1}{\underbrace{ \mathop{\mathbb{E}} \left[ \langle - \nabla_y F(\mathbf{x}_t), \Delta_{y,t} - \tau \nabla_y F(\mathbf{x}_t) \rangle \right] }} \nonumber \\
    & \quad\quad - \eta \tau \mathop{\mathbb{E}}\left[ \left\| \nabla_z F(\mathbf{x}_{t}) \right\|^2 \right] + \eta \underset{T_2}{\underbrace{ \mathop{\mathbb{E}} \left[ \langle - \nabla_z F(\mathbf{x}_t), \Delta_{z,t} - \tau \nabla_z F(\mathbf{x}_t) \rangle \right] }} \nonumber \\
    & \quad\quad + \frac{L_y \eta^2}{2} \underset{T_3}{ \underbrace{ \mathop{\mathbb{E}}\left[ \left\| \Delta_{y,t} \right\|^2 \right]}} + \frac{L_z \eta^2}{2} \underset{T_4}{\underbrace{ \mathop{\mathbb{E}}\left[ \left\| \Delta_{z,t} \right\|^2 \right]}} \label{framework}
\end{align}
Now, we bound $T_1 \sim T_4$, separately.

\textbf{Bounding $T_1$} 
\begin{align}
    T_1 & = \mathop{\mathbb{E}} \left[ \langle - \nabla_y F(\mathbf{x}_t), \Delta_{y,t} - \tau \nabla_y F(\mathbf{x}_t) \rangle \right] \nonumber \\
    & = \langle - \nabla_y F(\mathbf{x_t}), \mathop{\mathbb{E}} \left[ \sum_{i=1}^{m} \sum_{j=0}^{\tau - 1} p^i \nabla_y f(\mathbf{x}_{t,j}^i) - \tau \nabla_y F(\mathbf{x}_t) \right] \rangle \nonumber \\
    & = \langle - \nabla_y F(\mathbf{x_t}), \mathop{\mathbb{E}} \left[ \sum_{i=1}^{m} \sum_{j=0}^{\tau - 1} p^i \nabla_y F_i(\mathbf{x}_{t,j}^i) - \tau \nabla_z F(\mathbf{x}_t) \right] \rangle \nonumber \\
    & = \langle - \sqrt{\tau} \nabla_y F(\mathbf{x_t}), \mathop{\mathbb{E}} \left[ \frac{1}{\sqrt{\tau}} \sum_{i=1}^{m} \sum_{j=0}^{\tau - 1} p^i \nabla_y F_i(\mathbf{x}_{t,j}^i) - \sqrt{\tau} \nabla_y F(\mathbf{x}_t) \right] \rangle \nonumber \\
    & = \frac{\tau}{2} \left\| \nabla_y F(\mathbf{x}_t) \right\|^2 - \frac{1}{2} \mathop{\mathbb{E}} \left[ \left\| \frac{1}{\sqrt{\tau}} \sum_{i=1}^{m} \sum_{j=0}^{\tau - 1} p^i \nabla_y F_i(\mathbf{x}_{t,j}^i) \right\|^2 \right] \nonumber \\
    & \hspace{1cm} + \frac{1}{2} \mathop{\mathbb{E}} \left[ \left\| \frac{1}{\sqrt{\tau}} \sum_{i=1}^{m} \sum_{j=0}^{\tau - 1} p^i \nabla_y F_i(\mathbf{x}_{t,j}^i) - \sqrt{\tau} \nabla_y F(\mathbf{x}_t) \right\|^2 \right], \label{t1_long} \\
    & = \frac{\tau}{2} \left\| \nabla_y F(\mathbf{x}_t) \right\|^2 - \frac{1}{2} \mathop{\mathbb{E}} \left[ \left\| \frac{1}{\sqrt{\tau}} \sum_{i=1}^{m} \sum_{j=0}^{\tau - 1} p^i \nabla_y F_i(\mathbf{x}_{t,j}^i) \right\|^2 \right] \nonumber \\
    & \hspace{1cm} + \frac{1}{2} \mathop{\mathbb{E}} \left[ \left\| \frac{1}{\sqrt{\tau}} \sum_{i=1}^{m} \sum_{j=0}^{\tau - 1} p^i \nabla_y F_i(\mathbf{x}_{t,j}^i) - \sqrt{\tau} \left( \nabla_y F(\mathbf{x}_t) - \nabla_y F_s(\mathbf{x}_t) + \nabla_y F_s(\mathbf{x}_t) \right) \right\|^2 \right], \nonumber \\
    & \leq \frac{\tau}{2} \left\| \nabla_y F(\mathbf{x}_t) \right\|^2 - \frac{1}{2} \mathop{\mathbb{E}} \left[ \left\| \frac{1}{\sqrt{\tau}} \sum_{i=1}^{m} \sum_{j=0}^{\tau - 1} p^i \nabla_y F_i(\mathbf{x}_{t,j}^i) \right\|^2 \right] \nonumber \\
    & \hspace{1cm} + \mathop{\mathbb{E}} \left[ \left\| \frac{1}{\sqrt{\tau}} \sum_{i=1}^{m} \sum_{j=0}^{\tau - 1} p^i \nabla_y F_i(\mathbf{x}_{t,j}^i) - \sqrt{\tau} \nabla_y F_s(\mathbf{x}_t) \right\|^2 \right] \nonumber \\
    & \hspace{1cm} + \mathop{\mathbb{E}}\left[ \left\| \sqrt{\tau} \nabla_y F_s(\mathbf{x}_t) - \sqrt{\tau} \nabla_y F(\mathbf{x}_t) \right\|^2 \right] \label{t1_jensen} \\
    & \leq \frac{\tau}{2} \left\| \nabla_y F(\mathbf{x}_t) \right\|^2 - \frac{1}{2} \mathop{\mathbb{E}} \left[ \left\| \frac{1}{\sqrt{\tau}} \sum_{i=1}^{m} \sum_{j=0}^{\tau - 1} p^i \nabla_y F_i(\mathbf{x}_{t,j}^i) \right\|^2 \right] \nonumber \\
    & \hspace{1cm} + \mathop{\mathbb{E}} \left[ \left\| \frac{1}{\sqrt{\tau}} \sum_{i=1}^{m} \sum_{j=0}^{\tau - 1} p^i \nabla_y F_i(\mathbf{x}_{t,j}^i) - \sqrt{\tau} \nabla_y F_s(\mathbf{x}_t) \right\|^2 \right] + \frac{\tau}{s} \bar{\sigma}_y^2 \label{t1_sigma},
\end{align}
where (\ref{t1_long}) holds based on $\langle \mathbf{a}, \mathbf{b} \rangle = \frac{1}{2}\{ \left\| \mathbf{a} \right\|^2 + \left\| \mathbf{b} \right\|^2 - \left\| \mathbf{a} - \mathbf{b} \right\|^2 \}$; (\ref{t1_jensen}) is based on the convexity of $\ell_2$ norm and Jensen's inequality; (\ref{t1_sigma}) is based on Lemma \ref{subgrad}.

Because $p^i$ is $\frac{1}{s}$ for the strong clients and $0$ for the other clients, (\ref{t1_sigma}) can be re-written as follows.
\begin{align}
    T_1 & \leq \frac{\tau}{2} \left\| \nabla_y F(\mathbf{x}_t) \right\|^2 - \frac{1}{2 s^2 \tau} \mathop{\mathbb{E}} \left[ \left\| \sum_{i=1}^{s} \sum_{j=0}^{\tau - 1} \nabla_y F_i(\mathbf{x}_{t,j}^i) \right\|^2 \right] \nonumber \\
    & \hspace{1cm} + \mathop{\mathbb{E}} \left[ \left\| \frac{1}{\sqrt{\tau}} \sum_{i=1}^{s} \sum_{j=0}^{\tau - 1} \frac{1}{s} \nabla_y F_i(\mathbf{x}_{t,j}^i) - \sqrt{\tau} \nabla_y F_s(\mathbf{x}_t) \right\|^2 \right] + \frac{\tau}{s} \bar{\sigma}_y^2 \nonumber \\
    & \leq \frac{\tau}{2} \left\| \nabla_y F(\mathbf{x}_t) \right\|^2 - \frac{1}{2 s^2 \tau} \mathop{\mathbb{E}} \left[ \left\| \sum_{i=1}^{s} \sum_{j=0}^{\tau - 1} \nabla_y F_i(\mathbf{x}_{t,j}^i) \right\|^2 \right] \nonumber \\
    & \hspace{1cm} + \mathop{\mathbb{E}} \left[ \left\| \frac{1}{s \sqrt{\tau}} \sum_{i=1}^{s} \sum_{j=0}^{\tau - 1} \left( \nabla_y F_i(\mathbf{x}_{t,j}^i) - \nabla_y F_i(\mathbf{x}_t) \right) \right\|^2 \right] + \frac{\tau}{s} \bar{\sigma}_y^2 \nonumber \\
    & \leq \frac{\tau}{2} \left\| \nabla_y F(\mathbf{x}_t) \right\|^2 - \frac{1}{2 s^2 \tau} \mathop{\mathbb{E}} \left[ \left\| \sum_{i=1}^{s} \sum_{j=0}^{\tau - 1} \nabla_y F_i(\mathbf{x}_{t,j}^i) \right\|^2 \right] \nonumber \\
    & \hspace{1cm} + \frac{1}{s} \sum_{i=1}^{s} \sum_{j=0}^{\tau - 1} \mathop{\mathbb{E}} \left[ \left\| \nabla_y F_i(\mathbf{x}_{t,j}^i) - \nabla_y F_i(\mathbf{x}_t) \right\|^2 \right] + \frac{\tau}{s} \bar{\sigma}_y^2 \label{t1_jensen2} \\
    & \leq \frac{\tau}{2} \left\| \nabla_y F(\mathbf{x}_t) \right\|^2 - \frac{1}{2 s^2 \tau} \mathop{\mathbb{E}} \left[ \left\| \sum_{i=1}^{s} \sum_{j=0}^{\tau - 1} \nabla_y F_i(\mathbf{x}_{t,j}^i) \right\|^2 \right] + \frac{L_y^2}{s} \sum_{i=1}^{s} \sum_{j=0}^{\tau - 1} \mathop{\mathbb{E}} \left[ \left\| \mathbf{y}_{t,j}^i - \mathbf{y}_t \right\|^2 \right] + \frac{\tau}{s} \bar{\sigma}_y^2, \label{t1_final}
\end{align}
where (\ref{t1_jensen2}) is based on the convexity of $\ell_2$ norm and Jensen's inequality; (\ref{t1_final}) holds based on Assumption 1.

\textbf{Bounding $T_2$}
\begin{align}
    T_2 & = \mathop{\mathbb{E}} \left[ \langle - \nabla_z F(\mathbf{x}_t), \Delta_{z,t} - \tau \nabla_z F(\mathbf{x}_t) \rangle \right] \nonumber \\
    & = \langle - \nabla_z F(\mathbf{x}_t), \mathop{\mathbb{E}} \left[ \frac{1}{m} \sum_{i=1}^{m} \sum_{j=0}^{\tau - 1} \nabla_z f(\mathbf{x}_{t,j}^{i}) - \tau \nabla_z F(\mathbf{x}_{t}) \right] \rangle \nonumber \\
    & = \langle - \nabla_z F(\mathbf{x}_t), \mathop{\mathbb{E}} \left[ \frac{1}{m} \sum_{i=1}^{m} \sum_{j=0}^{\tau - 1} \nabla_z F_i (\mathbf{x}_{t,j}^{i}) - \tau \nabla_z F(\mathbf{x}_{t}) \right] \rangle \nonumber \\
    & = \langle - \sqrt{\tau} \nabla_z F(\mathbf{x}_t), \mathop{\mathbb{E}} \left[ \frac{1}{m \sqrt{\tau}} \sum_{i=1}^{m} \sum_{j=0}^{\tau - 1} \nabla_z F_i (\mathbf{x}_{t,j}^{i}) - \sqrt{\tau} \nabla_z F(\mathbf{x}_{t}) \right] \rangle \nonumber \\
    & = \frac{\tau}{2} \left\| \nabla_z F(\mathbf{x}_t) \right\|^2 + \frac{1}{2} \mathop{\mathbb{E}} \left[ \left\| \frac{1}{m \sqrt{\tau}} \sum_{i=1}^{m} \sum_{j=0}^{\tau - 1} \nabla_z F_i(\mathbf{x}_{t,j}^{i}) - \sqrt{\tau} \nabla_z F(\mathbf{x}_t) \right\|^2 \right]  - \frac{1}{2 m^2 \tau} \mathop{\mathbb{E}} \left[ \left\| \sum_{i=1}^{m} \sum_{j=0}^{\tau - 1} \nabla_z F_i (\mathbf{x}_{t,j}^{i}) \right\|^2 \right] \label{t2_long} \\
    & = \frac{\tau}{2} \left\| \nabla_z F(\mathbf{x}_t) \right\|^2 + \frac{1}{2} \mathop{\mathbb{E}} \left[ \left\| \frac{1}{m\sqrt{\tau}} \left( \sum_{i=1}^{m} \sum_{j=0}^{\tau - 1} \left( \nabla_z F_i(\mathbf{x}_{t,j}^{i}) - \nabla_z F_i(\mathbf{x}_t) \right) \right) \right\|^2 \right]  - \frac{1}{2 m^2 \tau}  \mathop{\mathbb{E}} \left[ \left\| \sum_{i=1}^{m} \sum_{j=0}^{\tau - 1} \nabla_z F_i (\mathbf{x}_{t,j}^{i}) \right\|^2 \right] \nonumber\\
    & = \frac{\tau}{2} \left\| \nabla_z F(\mathbf{x}_t) \right\|^2 + \frac{1}{2 m^2 \tau} \mathop{\mathbb{E}} \left[ \left\| \sum_{i=1}^{m} \sum_{j=0}^{\tau - 1} \left( \nabla_z F_i(\mathbf{x}_{t,j}^{i}) - \nabla_z F_i(\mathbf{x}_t) \right) \right\|^2 \right]  - \frac{1}{2 m^2 \tau} \mathop{\mathbb{E}} \left[ \left\| \sum_{i=1}^{m} \sum_{j=0}^{\tau - 1} \nabla_z F_i (\mathbf{x}_{t,j}^{i}) \right\|^2 \right] \nonumber\\
    & \leq \frac{\tau}{2} \left\| \nabla_z F(\mathbf{x}_t) \right\|^2 + \frac{1}{2m} \sum_{i=1}^{m} \sum_{j=0}^{\tau - 1} \mathop{\mathbb{E}} \left[ \left\| \nabla_z F_i(\mathbf{x}_{t,j}^{i}) - \nabla_z F_i(\mathbf{x}_t) \right\|^2 \right]  - \frac{1}{2 m^2 \tau} \mathop{\mathbb{E}} \left[ \left\| \sum_{i=1}^{m} \sum_{j=0}^{\tau - 1} \nabla_z F_i (\mathbf{x}_{t,j}^{i}) \right\|^2 \right] \label{t2_jensen} \\
    & \leq \frac{\tau}{2} \left\| \nabla_z F(\mathbf{x}_t) \right\|^2 + \frac{L_{z}^2}{2m} \sum_{i=1}^{m} \sum_{j=0}^{\tau - 1} \mathop{\mathbb{E}} \left[ \left\| \mathbf{z}_{t,j}^{i} - \mathbf{z}_t \right\|^2 \right]  - \frac{1}{2 m^2 \tau} \mathop{\mathbb{E}} \left[ \left\| \sum_{i=1}^{m} \sum_{j=0}^{\tau - 1} \nabla_z F_i (\mathbf{x}_{t,j}^{i}) \right\|^2 \right], \label{t2_final}
\end{align}
where (\ref{t2_long}) holds based on $\langle \mathbf{a}, \mathbf{b} \rangle = \frac{1}{2}\{ \left\| \mathbf{a} \right\|^2 + \left\| \mathbf{b} \right\|^2 - \left\| \mathbf{a} - \mathbf{b} \right\|^2 \}$; (\ref{t2_jensen}) follows from the convexity of $\ell_2$ norm and Jensen's inequality; (\ref{t2_final}) is based on Assumption 1.

\textbf{Bounding $T_3$}
\begin{align}
    T_3 &= \mathop{\mathbb{E}} \left[ \left\| \Delta_{y,t} \right\|^2 \right] \nonumber\\
    & = \mathop{\mathbb{E}}\left[ \left\| \sum_{i=1}^{m} p^i \Delta_{y,t}^i \right\|^2 \right] \nonumber\\
    & = \mathop{\mathbb{E}}\left[ \left\| \frac{1}{s} \sum_{i=1}^{s} \Delta_{y,t}^i \right\|^2 \right] \nonumber\\
    & = \frac{1}{s^2} \mathop{\mathbb{E}} \left[ \left\| \sum_{i=1}^{s} \sum_{j=0}^{\tau - 1} \nabla_y f(\mathbf{x}_{t,j}^{i}) \right\|^2 \right] \nonumber \\
    & = \frac{1}{s^2} \mathop{\mathbb{E}} \left[ \left\| \sum_{i=1}^{s} \sum_{j=0}^{\tau - 1} \left( \nabla_y f(\mathbf{x}_{t,j}^{i}) - \nabla_y F_i(\mathbf{x}_{t,j}^{i}) \right) \right\|^2 \right] + \frac{1}{s^2} \mathop{\mathbb{E}} \left[ \left\| \sum_{i=1}^{s} \sum_{j=0}^{\tau - 1} \nabla_y F_i (\mathbf{x}_{t,j}^{i}) \right\|^2 \right] \label{t3_mean} \\
    & \leq \frac{\tau}{s} \sigma_y^2 + \frac{1}{s^2} \mathop{\mathbb{E}} \left[ \left\| \sum_{i=1}^{s} \sum_{j=0}^{\tau - 1} \nabla_y F_i (\mathbf{x}_{t,j}^{i}) \right\|^2 \right], \label{t3_final}
\end{align}
where (\ref{t3_mean}) is based on a simple equation: $\mathop{\mathbb{E}}\left[ \left\| \mathbf{x} \right\|^2 \right] = \mathop{\mathbb{E}}\left[ \left\| \mathbf{x} - \mathop{\mathbb{E}}\left[ \mathbf{x}\right] \right\|^2 \right] + \left\| \mathop{\mathbb{E}}\left[ \mathbf{x} \right] \right\|^2$ and (\ref{t3_final}) is based on Assumption 3 and because $\nabla_y f(\mathbf{x}_{t,j}^{i}) - \nabla_y F(\mathbf{x}_{t,j}^{i})$ has a mean of 0 and independent across $s$ clients.

\textbf{Bounding $T_4$}
\begin{align}
    T_4 &= \mathop{\mathbb{E}} \left[ \left\| \Delta_{z,t} \right\|^2 \right] \nonumber\\
    & = \mathop{\mathbb{E}}\left[ \left\| \frac{1}{m} \sum_{i=1}^{m} \Delta_{z,t}^i \right\|^2 \right] \nonumber\\
    & = \frac{1}{m^2} \mathop{\mathbb{E}} \left[ \left\| \sum_{i=1}^{m} \sum_{j=0}^{\tau - 1} \nabla_z f(\mathbf{x}_{t,j}^{i}) \right\|^2 \right] \nonumber \\
    & = \frac{1}{m^2} \mathop{\mathbb{E}} \left[ \left\| \sum_{i=1}^{m} \sum_{j=0}^{\tau - 1} \left( \nabla_z f(\mathbf{x}_{t,j}^{i}) - \nabla_z F_i(\mathbf{x}_{t,j}^{i}) \right) \right\|^2 \right] + \frac{1}{m^2} \mathop{\mathbb{E}} \left[ \left\| \sum_{i=1}^{m} \sum_{j=0}^{\tau - 1} \nabla_z F_i (\mathbf{x}_{t,j}^{i}) \right\|^2 \right] \label{t4_mean} \\
    & \leq \frac{\tau}{m} \sigma_z^2 + \frac{1}{m^2} \mathop{\mathbb{E}} \left[ \left\| \sum_{i=1}^{m} \sum_{j=0}^{\tau - 1} \nabla_z F_i (\mathbf{x}_{t,j}^{i}) \right\|^2 \right], \label{t4_final}
\end{align}
where (\ref{t4_mean}) is based on a simple equation: $\mathop{\mathbb{E}}\left[ \left\| \mathbf{x} \right\|^2 \right] = \mathop{\mathbb{E}}\left[ \left\| \mathbf{x} - \mathop{\mathbb{E}}\left[ \mathbf{x}\right] \right\|^2 \right] + \left\| \mathop{\mathbb{E}}\left[ \mathbf{x} \right] \right\|^2$ and (\ref{t4_final}) holds based on Assumption 3 and because $\nabla_z f(\mathbf{x}_{t,j}^{i}) - \nabla_z F_i(\mathbf{x}_{t,j}^{i})$ has a mean of 0 and independent across $m$ clients.

Plugging in (\ref{t1_final}), (\ref{t2_final}), (\ref{t3_final}), and (\ref{t4_final}) into (\ref{framework}), we have
\begin{align}
    \mathop{\mathbb{E}}\left[ F(\mathbf{x}_{t+1}) - F(\mathbf{x}_{t}) \right] & \leq - \frac{\eta \tau}{2} \mathop{\mathbb{E}}\left[ \left\| \nabla_y F(\mathbf{x}_t) \right\|^2 \right] - \frac{\eta \tau}{2} \mathop{\mathbb{E}}\left[ \left\| \nabla_z F(\mathbf{x}_t) \right\|^2 \right] \nonumber \\
    &\hspace{1cm} + \frac{L_y^2 \eta}{s} \sum_{i=1}^{s} \sum_{j=0}^{\tau-1} \mathop{\mathbb{E}} \left[ \left\| \mathbf{y}_{t,j}^{i} - \mathbf{y}_t \right\|^2 \right] + \frac{L_z^2 \eta}{2m} \sum_{i=1}^{m} \sum_{j=0}^{\tau-1} \mathop{\mathbb{E}} \left[ \left\| \mathbf{z}_{t,j}^{i} - \mathbf{z}_t \right\|^2 \right] \nonumber \\
    &\hspace{1cm} + \frac{L_y \eta^2 \tau}{2s} \sigma_y^2 + \frac{\eta \tau}{s} \bar{\sigma}_y^2 + \frac{L_y \eta^2 \tau - \eta}{2 s^2 \tau} \mathop{\mathbb{E}} \left[ \left\| \sum_{i=1}^{s} \sum_{j=0}^{\tau-1} \nabla_y F_i (\mathbf{x}_{t,j}^{i}) \right\|^2 \right] \nonumber \\
    &\hspace{1cm} + \frac{L_z \eta^2 \tau}{2m} \sigma_z^2 + \frac{L_z \eta^2 \tau - \eta}{2 m^2 \tau} \mathop{\mathbb{E}} \left[ \left\| \sum_{i=1}^{m} \sum_{j=0}^{\tau-1} \nabla_z F_i (\mathbf{x}_{t,j}^{i}) \right\|^2 \right] \label{plugin}
\end{align}
If $\eta \leq 1 / (\tau L_{max})$, (\ref{plugin}) can be simplified as follows.
\begin{align}
    \mathop{\mathbb{E}}\left[ F(\mathbf{x}_{t+1}) - F(\mathbf{x}_{t}) \right] & \leq - \frac{\eta \tau}{2} \mathop{\mathbb{E}}\left[ \left\| \nabla_y F(\mathbf{x}_t) \right\|^2 \right] - \frac{\eta \tau}{2} \mathop{\mathbb{E}}\left[ \left\| \nabla_z F(\mathbf{x}_t) \right\|^2 \right] \nonumber \\
    &\hspace{1cm} + \frac{L_y \eta^2 \tau}{2s} \sigma_y^2 + \frac{\eta \tau}{s} \bar{\sigma}_y^2 + \frac{L_z \eta^2 \tau}{2m} \sigma_z^2 \nonumber \\
    &\hspace{1cm} + \frac{L_y^2 \eta}{s} \sum_{i=1}^{s} \sum_{j=0}^{\tau-1} \mathop{\mathbb{E}} \left[ \left\| \mathbf{y}_{t,j}^{i} - \mathbf{y}_t \right\|^2 \right] + \frac{L_z^2 \eta}{2m} \sum_{i=1}^{m} \sum_{j=0}^{\tau-1} \mathop{\mathbb{E}} \left[ \left\| \mathbf{z}_{t,j}^{i} - \mathbf{z}_t \right\|^2 \right] \label{plugin2}
\end{align}
Summing up (\ref{plugin2}) across all the communication rounds: $t \in \{0, \cdots, T-1 \}$, we have a telescoping sum as follows.
\begin{align}
    \sum_{t=0}^{T-1} \left( F(\mathbf{x}_{t+1}) - F(\mathbf{x}_t) \right) &\leq -\frac{\eta \tau}{2} \sum_{t=0}^{T-1} \mathop{\mathbb{E}}\left[ \left\| \nabla_y F(\mathbf{x}_t) \right\|^2 \right] - \frac{\eta \tau}{2} \sum_{t=0}^{T-1} \mathop{\mathbb{E}}\left[ \left\| \nabla_z F(\mathbf{x}_t) \right\|^2 \right] \nonumber \\
    &\hspace{1cm} + \frac{L_y T \eta^2 \tau}{2s} \sigma_y^2 + \frac{T \eta \tau}{s} \bar{\sigma}_y^2 + \frac{L_z \eta^2 T \tau}{2m} \sigma_z^2 \nonumber \\
    &\hspace{1cm} + \frac{L_y^2 \eta}{s} \sum_{t=0}^{T-1} \sum_{i=1}^{s} \sum_{j=0}^{\tau-1} \mathop{\mathbb{E}} \left[ \left\| \mathbf{y}_{t,j}^{i} - \mathbf{y}_t \right\|^2 \right] \nonumber \\
    &\hspace{1cm} + \frac{L_z^2 \eta}{2m} \sum_{t=0}^{T-1} \sum_{i=1}^{m} \sum_{j=0}^{\tau-1} \mathop{\mathbb{E}} \left[ \left\| \mathbf{z}_{t,j}^{i} - \mathbf{z}_t \right\|^2 \right] \nonumber
\end{align}
Because $\| \nabla_y F(\mathbf{x}) \|^2 + \| \nabla_z F(\mathbf{x}) \|^2 = \| \nabla F(\mathbf{x}) \|^2$, we can simplify the above bound as follows.
\begin{align}
    \sum_{t=0}^{T-1} \left( F(\mathbf{x}_{t+1}) - F(\mathbf{x}_t) \right) &\leq -\frac{\eta \tau}{2} \sum_{t=0}^{T-1} \mathop{\mathbb{E}}\left[ \left\| \nabla F(\mathbf{x}_t) \right\|^2 \right] \nonumber \\
    &\hspace{1cm} + \frac{L_y T \eta^2 \tau}{2s} \sigma_y^2 + \frac{T \eta \tau}{s} \bar{\sigma}_y^2 + \frac{L_z \eta^2 T \tau}{2m} \sigma_z^2 \nonumber \\
    &\hspace{1cm} + \frac{L_y^2 \eta}{s} \sum_{t=0}^{T-1} \sum_{i=1}^{s} \sum_{j=0}^{\tau-1} \mathop{\mathbb{E}} \left[ \left\| \mathbf{y}_{t,j}^{i} - \mathbf{y}_t \right\|^2 \right] \nonumber\\
    &\hspace{1cm} + \frac{L_z^2 \eta}{2m} \sum_{t=0}^{T-1} \sum_{i=1}^{m} \sum_{j=0}^{\tau-1} \mathop{\mathbb{E}} \left[ \left\| \mathbf{z}_{t,j}^{i} - \mathbf{z}_t \right\|^2 \right] \nonumber
\end{align}
After a minor rearrangement, we have
\begin{align}
    \sum_{t=0}^{T-1} \mathop{\mathbb{E}}\left[ \left\| \nabla F(\mathbf{x}_t) \right\|^2 \right] & \leq \frac{2}{\eta \tau} \left( F(\mathbf{x}_0) - F(\mathbf{x}_{T-1}) \right) \nonumber \\
    & \hspace{1cm} + \frac{L_y\eta T}{s} \sigma_y^2 + \frac{2T}{s} \bar{\sigma}_y^2 + \frac{L_z \eta T}{m} \sigma_z^2 \nonumber \\
    &\hspace{1cm} + \frac{2 L_y^2}{s \tau} \sum_{t=0}^{T-1} \sum_{i=1}^{s} \sum_{j=0}^{\tau-1} \mathop{\mathbb{E}} \left[ \left\| \mathbf{y}_{t,j}^{i} - \mathbf{y}_t \right\|^2 \right] \nonumber \\
    & \hspace{1cm} + \frac{L_z^2}{m \tau} \sum_{t=0}^{T-1} \sum_{i=1}^{m} \sum_{j=0}^{\tau-1} \mathop{\mathbb{E}} \left[ \left\| \mathbf{z}_{t,j}^{i} - \mathbf{z}_t \right\|^2 \right] \nonumber
\end{align}
This completes the proof.
\end{proof}

\begin{lemma}
\label{lemma:discrepancy}
(model discrepancy) Under Assumption $1 \sim 3$, Algorithm 1 ensures
\begin{align}
    \frac{1}{s} \sum_{i=1}^{s} \sum_{j=0}^{\tau - 1} \mathop{\mathbb{E}} \left[ \left\| \mathbf{y}_{t,j}^{i} - \mathbf{y}_t \right\|^2 \right] \leq \frac{\eta^2 \tau (\tau - 1)}{1 - A_y} \sigma_y^2 + \frac{2 \tau A_y}{(1 - A_y)L_y^2} \bar{\sigma}_y^2 + \frac{2 \tau A_y}{(1 - A_y)L_y^2} \mathop{\mathbb{E}}\left[ \left\| \nabla_y F (\mathbf{x}_t) \right\|^2 \right], \nonumber
\end{align}
where $A_y \coloneqq 2\eta^2 L_y^2 \tau (\tau - 1) < 1$.
\end{lemma}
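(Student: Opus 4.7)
My plan is to follow the standard local-deviation argument for local-SGD, but carefully track two kinds of drift terms so that the Lipschitz constant couples only to the $\mathbf{y}$-partition (the piece-wise smoothness alluded to in the main text). First I would write out the local update explicitly: since the $s$ strong clients perform $j$ independent SGD steps on $\mathbf{y}$ starting from the synchronized point $\mathbf{y}_t$,
\begin{align}
\mathbf{y}_{t,j}^{i} - \mathbf{y}_t \;=\; -\eta \sum_{k=0}^{j-1} \nabla_y f(\mathbf{x}_{t,k}^{i}). \nonumber
\end{align}
Taking expectations of the squared norm and adding/subtracting $\nabla_y F_i(\mathbf{x}_{t,k}^{i})$ splits the deviation into a pure stochastic noise piece and a drifted deterministic piece. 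The noise piece is zero-mean across $k$ (Assumption 2) and independent in $k$, so Assumption 3 gives a clean $\eta^2 j\sigma_y^2 \leq \eta^2 \tau \sigma_y^2$ contribution.

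Next I would bound the deterministic piece. Applying Jensen's inequality in the outer sum yields a factor $j \leq \tau$ in front, producing
\begin{align}
\mathbb{E}\!\left[\left\|\sum_{k=0}^{j-1}\nabla_y F_i(\mathbf{x}_{t,k}^{i})\right\|^2\right] \leq j\sum_{k=0}^{j-1}\mathbb{E}\!\left[\left\|\nabla_y F_i(\mathbf{x}_{t,k}^{i})\right\|^2\right]. \nonumber
\end{align}
Inside each term I would add/subtract $\nabla_y F_i(\mathbf{x}_t)$ and $\nabla_y F(\mathbf{x}_t)$, using $\|a+b+c\|^2 \leq 2\|a\|^2 + 2\|b+c\|^2$. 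The Lipschitz term, via the piece-wise $L_y$-smoothness of $\nabla_y F_i$, becomes $L_y^2\,\mathbb{E}\|\mathbf{y}_{t,k}^{i}-\mathbf{y}_t\|^2$; the inter-client heterogeneity term yields $\bar{\sigma}_y^2$ (on average over the $s$ strong clients, using Assumption 3); and the remaining term is $\|\nabla_y F(\mathbf{x}_t)\|^2$.

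I would then average over the $s$ strong clients and over $j \in \{0,\dots,\tau-1\}$. Doing so, the Lipschitz piece couples the left-hand side to itself: writing $X_t := \frac{1}{s}\sum_{i=1}^{s}\sum_{j=0}^{\tau-1}\mathbb{E}\|\mathbf{y}_{t,j}^{i}-\mathbf{y}_t\|^2$, the argument produces a recursive inequality of the form
\begin{align}
X_t \;\leq\; \eta^2\tau(\tau-1)\sigma_y^2 \;+\; 2\tau(\tau-1)\eta^2\left(\bar{\sigma}_y^2 + \mathbb{E}\!\left[\|\nabla_y F(\mathbf{x}_t)\|^2\right]\right) \;+\; 2\eta^2 L_y^2 \tau(\tau-1)\, X_t, \nonumber
\end{align}
where the last coefficient is exactly $A_y$. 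Provided $A_y<1$, I solve by moving the $A_y X_t$ term to the left and dividing by $1-A_y$, which produces precisely the claimed bound after rewriting $2\tau(\tau-1)\eta^2 = A_y/L_y^2$ in the two last terms.

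\textbf{Main obstacle.} The routine calculation is the variance/Jensen splitting; the delicate step is making sure the Lipschitz bound is expressed in $\|\mathbf{y}_{t,k}^{i}-\mathbf{y}_t\|^2$ rather than in the full $\|\mathbf{x}_{t,k}^{i}-\mathbf{x}_t\|^2$, because otherwise a symmetric recursion in $\mathbf{z}$-drift would appear and prevent a clean closed-form bound involving only $\mathbf{y}$-quantities. This is exactly what the piece-wise Lipschitz reading of Assumption~1 buys us. The other subtle point is verifying that the coefficient bookkeeping after the Jensen step produces the factor $2\tau(\tau-1)\eta^2 = A_y/L_y^2$ on both the $\bar{\sigma}_y^2$ and the $\|\nabla_y F(\mathbf{x}_t)\|^2$ terms, matching the stated form exactly.
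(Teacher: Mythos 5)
Your plan follows exactly the same route as the paper's proof: write $\mathbf{y}_{t,j}^{i}-\mathbf{y}_t$ as the accumulated local gradients, split off the stochastic noise via Assumption 2/3, apply Jensen to the deterministic part, add/subtract $\nabla_y F_i(\mathbf{x}_t)$ (and eventually $\nabla_y F(\mathbf{x}_t)$) to produce a Lipschitz term that closes the recursion in $X_t$ with coefficient $A_y$, and solve by dividing by $1-A_y$. Your emphasis on using the partition-wise smoothness so that the drift couples only to $\|\mathbf{y}_{t,k}^{i}-\mathbf{y}_t\|^2$ is also exactly what the paper does.

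However, your coefficient bookkeeping on the two constant terms is off by a factor of $\tau$, and you flagged precisely this step as the one needing verification. After Jensen you have a prefactor of order $\eta^2 j$ multiplying $\sum_{k=0}^{j-1}\mathbb{E}\bigl[\|\cdot\|^2\bigr]$; when the summand is the \emph{constant} $\bar{\sigma}_y^2$ or $\mathbb{E}\bigl[\|\nabla_y F(\mathbf{x}_t)\|^2\bigr]$, the inner sum over $k$ contributes an additional factor of $j$, so summing over $j\in\{0,\dots,\tau-1\}$ yields a coefficient of order $\eta^2\tau^2(\tau-1)$, not $\eta^2\tau(\tau-1)$. Concretely, the correct recursion carries $2\eta^2\tau^2(\tau-1)=\tau A_y/L_y^2$ on $\mathbb{E}\bigl[\|\nabla_y F_i(\mathbf{x}_t)\|^2\bigr]$ (which becomes $2\tau A_y/L_y^2$ after the final $F_i$-versus-$F$ split), matching the lemma's $\frac{2\tau A_y}{(1-A_y)L_y^2}$; your claimed $2\tau(\tau-1)\eta^2=A_y/L_y^2$ is smaller by $2\tau$ and cannot be obtained from the decomposition you describe, since the double sum $\sum_j j\sum_{k=0}^{j-1}1$ is unavoidably of order $\tau^3$. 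With that factor restored the argument goes through and reproduces the paper's bound exactly; only the Lipschitz term, whose summand genuinely depends on $k$, retains the $2\eta^2 L_y^2\tau(\tau-1)=A_y$ coefficient you state.
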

\begin{proof}
\begin{align}
    \mathop{\mathbb{E}} \left[ \left\| \mathbf{y}_{t,j}^{i} - \mathbf{y}_t \right\|^2 \right] &= \eta^2 \mathop{\mathbb{E}} \left[ \left\| \sum_{k=0}^{j-1} \nabla_y f(\mathbf{x}_{t,k}^{i}) \right\|^2 \right] \nonumber\\
    &= \eta^2 \mathop{\mathbb{E}} \left[ \left\| \sum_{k=0}^{j-1} \left( \nabla_y f(\mathbf{x}_{t,k}^{i}) - \nabla_y F_i(\mathbf{x}_{t,k}^{i}) + \nabla_y F_i(\mathbf{x}_{t,k}^{i}) \right) \right\|^2 \right] \nonumber \\
    &\leq 2\eta^2 \mathop{\mathbb{E}} \left[ \left\| \sum_{k=0}^{j-1} \left( \nabla_y f(\mathbf{x}_{t,k}^{i}) - \nabla_y F_i(\mathbf{x}_{t,k}^{i}) \right) \right\|^2 \right] + 2\eta^2 \mathop{\mathbb{E}} \left[ \left\| \sum_{k=0}^{j-1} \nabla F_i(\mathbf{x}_{t,k}^{i}) \right\|^2 \right] \label{discrepancy_jensen}\\
    &= 2\eta^2 \sum_{k=0}^{j-1} \mathop{\mathbb{E}} \left[ \left\| \nabla_y f(\mathbf{x}_{t,k}^{i}) - \nabla_y F_i(\mathbf{x}_{t,k}^{i}) \right\|^2 \right] + 2\eta^2 \mathop{\mathbb{E}} \left[ \left\| \sum_{k=0}^{j-1} \nabla_y F_i(\mathbf{x}_{t,k}^{i}) \right\|^2 \right] \label{discrepancy_special}\\
    &\leq 2\eta^2 (j-1) \sigma_y^2 + 2\eta^2 (j - 1) \sum_{k=0}^{j-1} \mathop{\mathbb{E}} \left[ \left\| \nabla_y F_i(\mathbf{x}_{t,k}^{i}) \right\|^2 \right] \label{discrepancy_jensen2}
\end{align}
where (\ref{discrepancy_jensen}) and (\ref{discrepancy_jensen2}) follows the convexity of $\ell_2$ norm and Jensen's inequality; (\ref{discrepancy_special}) holds because $\nabla_y f(\mathbf{x}_{t,k}^{i}) - \nabla_y F_i(\mathbf{x}_{t,k}^{i})$ has zero mean and is independent across $s$.

Then the second term on the right-hand side in (\ref{discrepancy_jensen2}) can be bounded as follows.
\begin{align}
    \mathop{\mathbb{E}} \left[ \left\| \mathbf{y}_{t,j}^{i} - \mathbf{y}_t \right\|^2 \right] &\leq 2\eta^2 (j-1) \sigma_y^2 + 2\eta^2 (j - 1) \sum_{k=0}^{j-1} \mathop{\mathbb{E}} \left[ \left\| \nabla_y F_i(\mathbf{x}_{t,k}^{i}) \right\|^2 \right] \nonumber \\
    & \leq 2\eta^2 (j-1) \sigma_y^2 + 4\eta^2 (j - 1) \sum_{k=0}^{j-1} \mathop{\mathbb{E}} \left[ \left\| \nabla_y F_i(\mathbf{x}_{t,k}^{i}) - \nabla_y F_i(\mathbf{x}_t) \right\|^2 \right] \nonumber\\
    &\hspace{1cm} + 4\eta^2 (j - 1) \sum_{k=0}^{j-1} \mathop{\mathbb{E}}\left[ \left\| \nabla_y F_i(\mathbf{x}_t) \right\|^2 \right] \nonumber\\
    & \leq 2\eta^2 (j-1) \sigma_y^2 + 4\eta^2 (j - 1) L_y^2 \sum_{k=0}^{j-1} \mathop{\mathbb{E}} \left[ \left\| \mathbf{y}_{t,k}^{i} - \mathbf{y}_t \right\|^2 \right] \nonumber\\
    &\hspace{1cm} + 4\eta^2 (j - 1) \sum_{k=0}^{j-1} \mathop{\mathbb{E}}\left[ \left\| \nabla_y F_i(\mathbf{x}_t) \right\|^2 \right] \nonumber
\end{align}

By summing up the above step-wise model discrepancy across all $\tau$ iterations within a communication round, we have
\begin{align}
    \sum_{j=0}^{\tau - 1} \mathop{\mathbb{E}} \left[ \left\| \mathbf{y}_{t,j}^{i} - \mathbf{y}_t \right\|^2 \right] & \leq \sum_{j=0}^{\tau - 1} 2\eta^2 (j-1) \sigma_y^2 + \sum_{j=0}^{\tau - 1} 4\eta^2 (j - 1) L_y^2 \sum_{k=0}^{j-1} \mathop{\mathbb{E}} \left[ \left\| \mathbf{y}_{t,k}^{i} - \mathbf{y}_t \right\|^2 \right] \nonumber \\
    & \hspace{1cm} + \sum_{j=0}^{\tau - 1} 4\eta^2 (j - 1) \sum_{k=0}^{j-1} \mathop{\mathbb{E}}\left[ \left\| \nabla_y F_i(\mathbf{x}_t) \right\|^2 \right] \nonumber \\
    & \leq \eta^2 \tau (\tau - 1) \sigma_y^2 + 4\eta^2 L_y^2 \sum_{j=0}^{\tau - 1} (j - 1) \sum_{k=0}^{j - 1} \mathop{\mathbb{E}} \left[ \left\| \mathbf{y}_{t,k}^{i} - \mathbf{y}_t \right\|^2 \right] \nonumber \\
    & \hspace{1cm} + 4\eta^2 \sum_{j=0}^{\tau - 1} (j - 1) \sum_{k=0}^{j-1} \mathop{\mathbb{E}}\left[ \left\| \nabla_y F_i(\mathbf{x}_t) \right\|^2 \right] \nonumber \\
    & \leq \eta^2 \tau (\tau - 1) \sigma_y^2 + 2\eta^2 L^2 \tau (\tau - 1) \sum_{j=0}^{\tau - 1} \mathop{\mathbb{E}} \left[ \left\| \mathbf{y}_{t,j}^{i} - \mathbf{y}_t \right\|^2 \right] \nonumber \\
    & \hspace{1cm} + 2\eta^2 \tau^2 (\tau - 1) \mathop{\mathbb{E}}\left[ \left\| \nabla_y F_i(\mathbf{x}_t) \right\|^2 \right]. \nonumber
\end{align}

After a minor rearranging, we have
\begin{align}
    \sum_{j=0}^{\tau - 1} \mathop{\mathbb{E}} \left[ \left\| \mathbf{y}_{t,j}^{i}  - \mathbf{y}_{t} \right\|^2 \right] \leq \frac{\eta^2 \tau (\tau - 1)}{1 - 2\eta^2 L^2 \tau (\tau - 1)} \sigma_y^2 + \frac{2\eta^2 \tau^2 (\tau - 1)}{1 - 2\eta^2 L_y^2 \tau (\tau - 1)} \mathop{\mathbb{E}}\left[ \left\| \nabla_y F_i(\mathbf{x}_t) \right\|^2 \right]. \label{long_localstep}
\end{align}

For the sake of simplicity, we define a constant $A_y \coloneqq 2\eta^2 L_y^2 \tau (\tau - 1) < 1$. Then, (\ref{long_localstep}) can be simplified as follows.
\begin{align}
    \sum_{j=0}^{\tau - 1} \mathop{\mathbb{E}} \left[ \left\| \mathbf{y}_{t,j}^{i}  - \mathbf{y}_{t} \right\|^2 \right] \leq \frac{\eta^2 \tau (\tau - 1)}{1 - A_y} \sigma_y^2 + \frac{\tau A_y}{(1 - A_y) L_y^2} \mathop{\mathbb{E}}\left[ \left\| \nabla_y F_i(\mathbf{x}_t) \right\|^2 \right]. \nonumber
\end{align}

Finally, averaging the above bound across all $s$ clients, we have
\begin{align}
    \frac{1}{s} \sum_{j=0}^{\tau - 1} \sum_{i=1}^{s} \mathop{\mathbb{E}} \left[ \left\| \mathbf{y}_{t,j}^{i}  - \mathbf{y}_{t} \right\|^2 \right] & \leq \frac{\eta^2 \tau (\tau - 1)}{1 - A_y} \sigma_y^2 + \frac{\tau A_y}{s(1 - A_y)L_y^2} \sum_{i=1}^{s} \mathop{\mathbb{E}}\left[ \left\| \nabla_y F_i(\mathbf{x}_t) \right\|^2 \right] \nonumber\\
    & = \frac{\eta^2 \tau (\tau - 1)}{1 - A_y} \sigma_y^2 + \frac{\tau A_y}{s(1 - A_y)L_y^2} \sum_{i=1}^{s} \mathop{\mathbb{E}}\left[ \left\| \left( \nabla_y F_i(\mathbf{x}_t) - \nabla_y F (\mathbf{x}_t) + \nabla_y F (\mathbf{x}_t) \right) \right\|^2 \right] \nonumber\\
    & \leq \frac{\eta^2 \tau (\tau - 1)}{1 - A_y} \sigma_y^2 + \frac{2 \tau A_y}{s(1 - A_y)L_y^2} \sum_{i=1}^{s} \mathop{\mathbb{E}}\left[ \left\| \left( \nabla_y F_i(\mathbf{x}_t) - \nabla_y F (\mathbf{x}_t) \right) \right\|^2 \right] \nonumber \\
    & \hspace{1cm} + \frac{2 \tau A_y}{s(1 - A_y)L_y^2} \sum_{i=1}^{s} \mathop{\mathbb{E}}\left[ \left\| \nabla_y F (\mathbf{x}_t) \right\|^2 \right] \nonumber\\
    & \leq \frac{\eta^2 \tau (\tau - 1)}{1 - A_y} \sigma_y^2 + \frac{2 \tau A_y}{(1 - A_y)L_y^2} \bar{\sigma}_y^2 + \frac{2 \tau A_y}{(1 - A_y)L_y^2} \mathop{\mathbb{E}}\left[ \left\| \nabla_y F (\mathbf{x}_t) \right\|^2 \right], \label{discrepancy_diff}
\end{align}
where (\ref{discrepancy_diff}) holds based on Assumption 3.
This completes the proof.
\end{proof}

\subsection {Other Lemmas and Proofs}
Herein, we show lemmas and their proofs used in the above analysis.

\begin{lemma}
\label{subgrad}
Under Assumption3, it is guaranteed that
\begin{align}
    \mathop{\mathbb{E}} \left[ \left\| \frac{1}{k} \sum_{i=1}^{k} \nabla F_i(\mathbf{x}) - \nabla F(\mathbf{x}) \right\|^2 \right] \leq \frac{\bar{\sigma}^2}{k}, \forall k \in \{1, \cdots, m\} \nonumber
\end{align}
\end{lemma}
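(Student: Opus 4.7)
The plan is to establish the lemma through a standard variance-of-sample-mean computation, treating the $k$ summands as independent, mean-zero random vectors.

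First I would rewrite the quantity inside the norm by observing that $\nabla F(\mathbf{x}) = \frac{1}{m}\sum_{j=1}^{m} \nabla F_j(\mathbf{x})$, so that each difference $\nabla F_i(\mathbf{x}) - \nabla F(\mathbf{x})$ has expectation zero when the client index $i$ is drawn uniformly from $\{1,\ldots,m\}$. Under the natural interpretation that the expectation in the lemma is taken over such an i.i.d.\ (or at least pairwise-uncorrelated) sampling of the $k$ indices, I would then expand the squared $\ell_2$ norm as a double sum over pairs $(i,i')$. The off-diagonal expectations $\mathop{\mathbb{E}}[\langle \nabla F_i(\mathbf{x}) - \nabla F(\mathbf{x}),\, \nabla F_{i'}(\mathbf{x}) - \nabla F(\mathbf{x})\rangle]$ factor into a product of two zero-mean expectations and therefore vanish, leaving only the $k$ diagonal terms, namely $\frac{1}{k^2}\sum_{i=1}^{k}\mathop{\mathbb{E}}[\|\nabla F_i(\mathbf{x}) - \nabla F(\mathbf{x})\|^2]$.

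Finally I would invoke Assumption 3, which bounds the heterogeneity of the full-batch gradient by $\bar\sigma^2$ uniformly over clients, to replace each diagonal term, producing $\tfrac{1}{k^2}\cdot k\cdot \bar\sigma^2 = \bar\sigma^2/k$, as claimed.

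The main subtlety will be the sampling interpretation: the lemma statement does not specify which randomness the expectation is over, yet a genuine $1/k$ decay requires the $k$ summands to be (at least pairwise) independent. The natural reading, consistent with how the lemma is invoked in the proof of Lemma~\ref{lemma:framework} to bound $\|\nabla_y F_s(\mathbf{x}_t) - \nabla_y F(\mathbf{x}_t)\|^2$ by a term of order $\bar\sigma_y^2/s$, is that the $k$ client indices are drawn uniformly from the pool of $m$ clients; without such structure, the bound degenerates to $\bar\sigma^2$ by Jensen's inequality and is no longer sharp enough to support linear-speedup-style rates downstream.
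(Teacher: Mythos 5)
Your proposal matches the paper's own proof essentially step for step: both expand the squared norm of the centered sum, drop the cross terms on the grounds that the deviations $\nabla F_i(\mathbf{x}) - \nabla F(\mathbf{x})$ are mean-zero and pairwise uncorrelated across $i$, and then bound each of the $k$ diagonal terms by $\bar{\sigma}^2$ via Assumption 3. The sampling subtlety you flag is real --- the paper simply asserts independence across $i$ without specifying the underlying randomness --- but your reading is the one the paper intends, so there is nothing further to add.
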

\begin{proof}
\begin{align}
    \mathop{\mathbb{E}} \left[ \left\| \frac{1}{k} \sum_{i=1}^{k} \nabla F_i(\mathbf{x}) - \nabla F(\mathbf{x}) \right\|^2 \right] &= \frac{1}{k^2} \mathop{\mathbb{E}} \left[ \left\| \sum_{i=1}^{k} \nabla F_i(\mathbf{x}) - k \nabla F(\mathbf{x}) \right\|^2 \right] \nonumber\\
    &= \frac{1}{k^2} \mathop{\mathbb{E}} \left[ \left\| \sum_{i=1}^{k} \left( \nabla F_i(\mathbf{x}) - \nabla F(\mathbf{x}) \right) \right\|^2 \right] \nonumber\\
    &= \frac{1}{k^2} \sum_{i=1}^{k} \mathop{\mathbb{E}} \left[ \left\| \nabla F_i(\mathbf{x}) - \nabla F(\mathbf{x}) \right\|^2 \right] \label{lemma4:out}\\
    &\leq \frac{1}{k^2} \sum_{i=1}^{k} \bar{\sigma}^2 \label{lemma4:assum1} \\
    & = \frac{\bar{\sigma}^2}{k}, \nonumber
\end{align}
where (\ref{lemma4:out}) is because $\nabla F_i(\mathbf{x}) - \nabla F(\mathbf{x})$ is independent across all $i$ and thus $(\nabla F_i(\mathbf{x}) - \nabla F(\mathbf{x}))^{\top}(\nabla F_j(\mathbf{x}) - \nabla F(\mathbf{x})) = 0, \forall i \neq j$; (\ref{lemma4:assum1}) follows Assumption 3.
\end{proof}



\end{document}